\documentclass{article}


\usepackage[nonatbib,final]{neurips_2023}

\usepackage[utf8]{inputenc} 
\usepackage[T1]{fontenc}    
\usepackage{hyperref}       
\usepackage{url}            
\usepackage{booktabs}       
\usepackage{amsfonts}       
\usepackage{nicefrac}       
\usepackage{microtype}      
\usepackage{xcolor}         

\usepackage{amsthm}
\theoremstyle{plain}
\newtheorem{theorem}{Theorem}[section]
\newtheorem{proposition}[theorem]{Proposition}
\newtheorem{lemma}[theorem]{Lemma}
\newtheorem{corollary}[theorem]{Corollary}
\theoremstyle{definition}

\theoremstyle{remark}

\title{\method: A Scalable Path-based Reasoning Approach for Knowledge Graphs}

%

\author{%
    Zhaocheng Zhu\textsuperscript{1,2,}\thanks{Equal contribution. Code is available at \url{https://github.com/DeepGraphLearning/AStarNet}}\textsuperscript{\enspace}, Xinyu Yuan\textsuperscript{1,2,}\footnotemark[1]\textsuperscript{\enspace}, Mikhail Galkin\textsuperscript{3,}\thanks{Work done while at Mila - Qu\'ebec AI Institute.}\textsuperscript{\enspace}, Sophie Xhonneux\textsuperscript{1,2}  \\
    \bf{Ming Zhang\textsuperscript{4}, Maxime Gazeau\textsuperscript{5}, Jian Tang\textsuperscript{1,6,7}} \\
    \textsuperscript{1}Mila - Qu\'ebec AI Institute, \textsuperscript{2}University of Montr\'eal \\
    \textsuperscript{3}Intel AI Lab, \textsuperscript{4}Peking University, \textsuperscript{5}LG Electronics AI Lab \\
    \textsuperscript{6}HEC Montr\'eal, \textsuperscript{7}CIFAR AI Chair
}

\usepackage{xspace}
\usepackage[frozencache,cachedir=minted]{minted}
\usepackage{wrapfig}
\usepackage{caption}
\usepackage{amssymb}
\usepackage{enumitem}
\usepackage[multiple]{footmisc}
\usepackage{multirow}
\usepackage{adjustbox}
\usepackage{algorithm}
\usepackage{subcaption}
\usepackage{algpseudocode}

\makeatletter
\setlength\@fptop{0\p@}
\makeatother

\newcommand{\method}{A*Net\xspace}
\newcommand{\mathbbm}[1]{\text{\usefont{U}{bbm}{m}{n}#1}}
\newcommand{\ozero}{\textcircled{\scriptsize{0}}}
\newcommand{\oone}{\textcircled{\scriptsize{1}}}

\algnewcommand{\LineComment}[1]{\State \(\triangleright\) #1}

\usepackage{amsmath,amsfonts,bm}









\def\eqref#1{equation~\ref{#1}}









\def\1{\bm{1}}








\def\vb{{\bm{b}}}

\def\vg{{\bm{g}}}
\def\vh{{\bm{h}}}

\def\vp{{\bm{p}}}
\def\vq{{\bm{q}}}
\def\vr{{\bm{r}}}
\def\vs{{\bm{s}}}

\def\vw{{\bm{w}}}



\def\mW{{\bm{W}}}

\DeclareMathAlphabet{\mathsfit}{\encodingdefault}{\sfdefault}{m}{sl}
\SetMathAlphabet{\mathsfit}{bold}{\encodingdefault}{\sfdefault}{bx}{n}


\def\gE{{\mathcal{E}}}

\def\gG{{\mathcal{G}}}

\def\gL{{\mathcal{L}}}

\def\gP{{\mathcal{P}}}
\def\gQ{{\mathcal{Q}}}
\def\gR{{\mathcal{R}}}

\def\gV{{\mathcal{V}}}

\def\gX{{\mathcal{X}}}










\newcommand{\E}{\mathbb{E}}



\begin{document}

\maketitle

\begin{abstract}

Reasoning on large-scale knowledge graphs has been long dominated by embedding methods. While path-based methods possess the inductive capacity that embeddings lack, their scalability is limited by the exponential number of paths. Here we present \method, a scalable path-based method for knowledge graph reasoning. Inspired by the A* algorithm for shortest path problems, our \method learns a priority function to select important nodes and edges at each iteration, to \emph{reduce time and memory footprint for both training and inference}. The ratio of selected nodes and edges can be specified to trade off between performance and efficiency. Experiments on both transductive and inductive knowledge graph reasoning benchmarks show that \method achieves competitive performance with existing state-of-the-art path-based methods, while merely visiting 10\% nodes and 10\% edges at each iteration. On a million-scale dataset ogbl-wikikg2, \method not only achieves a new state-of-the-art result, but also converges faster than embedding methods. \method is the first path-based method for knowledge graph reasoning at such scale.

\end{abstract}

\section{Introduction}

\begin{wrapfigure}{r}{0.41\textwidth}
    \centering
    \vspace{-1.5em}
    \includegraphics[width=0.41\textwidth]{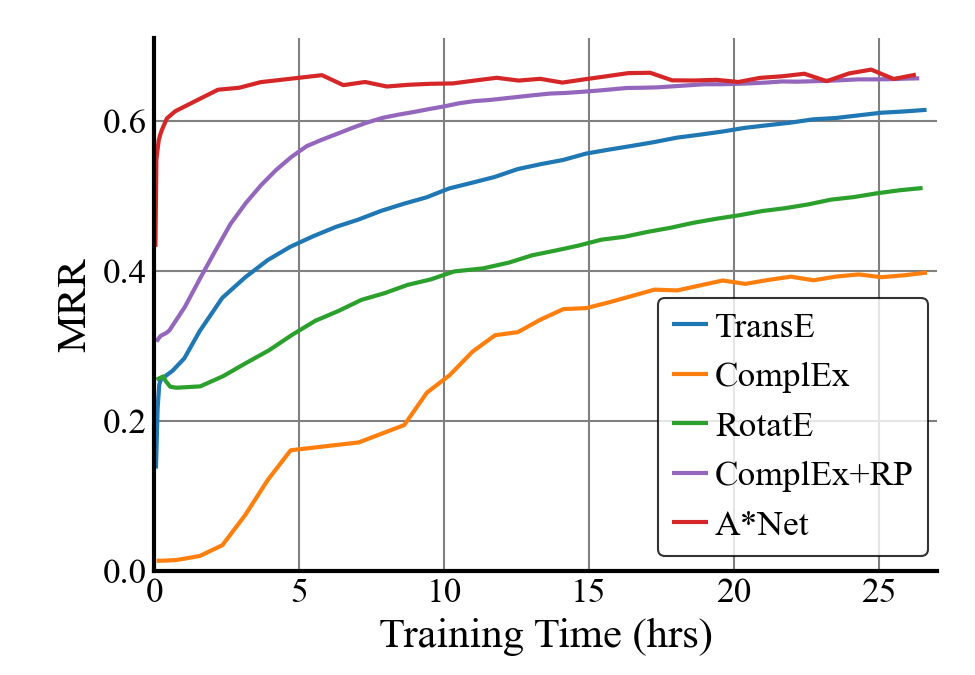}
    \caption{Validation MRR w.r.t.\ training time on ogbl-wikikg2 (1 A100 GPU). \method achieves state-of-the-art performance and the fastest convergence.}
    \label{fig:ogbl-wikikg2}
    \vspace{-1em}
\end{wrapfigure}

Reasoning, the ability to apply logic to draw new conclusions from existing facts, has been long pursued as a goal of artificial intelligence~\cite{pearl1988probabilistic, goertzel2007artificial}. Knowledge graphs encapsulate facts in relational edges between entities, and serve as a foundation for reasoning. Reasoning over knowledge graphs is usually studied in the form of knowledge graph completion, where a model is asked to predict missing triplets based on observed triplets in the knowledge graph. Such a task can be used to not only populate existing knowledge graphs, but also improve downstream applications like multi-hop logical reasoning~\cite{ren2023neural}, question answering~\cite{berant2013semantic} and recommender systems~\cite{zhang2016collaborative}.

One challenge central to knowledge graph reasoning is the scalability of reasoning methods, as many real-world knowledge graphs~\cite{auer2007dbpedia, vrandevcic2014wikidata} contain millions of entities and triplets. Typically, large-scale knowledge graph reasoning is solved by embedding methods~\cite{bordes2013translating, trouillon2016complex, sun2019rotate}, which learn an embedding for each entity and relation to reconstruct the structure of the knowledge graph. Due to its simplicity, embedding methods have become the \emph{de facto} standard for knowledge graphs with millions of entities and triplets. With the help of multi-GPU embedding systems~\cite{zhu2019graphvite, zheng2020dgl}, they can further scale to knowledge graphs with billions of triplets.

\begin{figure*}[t]
    \centering
    \includegraphics[width=0.98\textwidth]{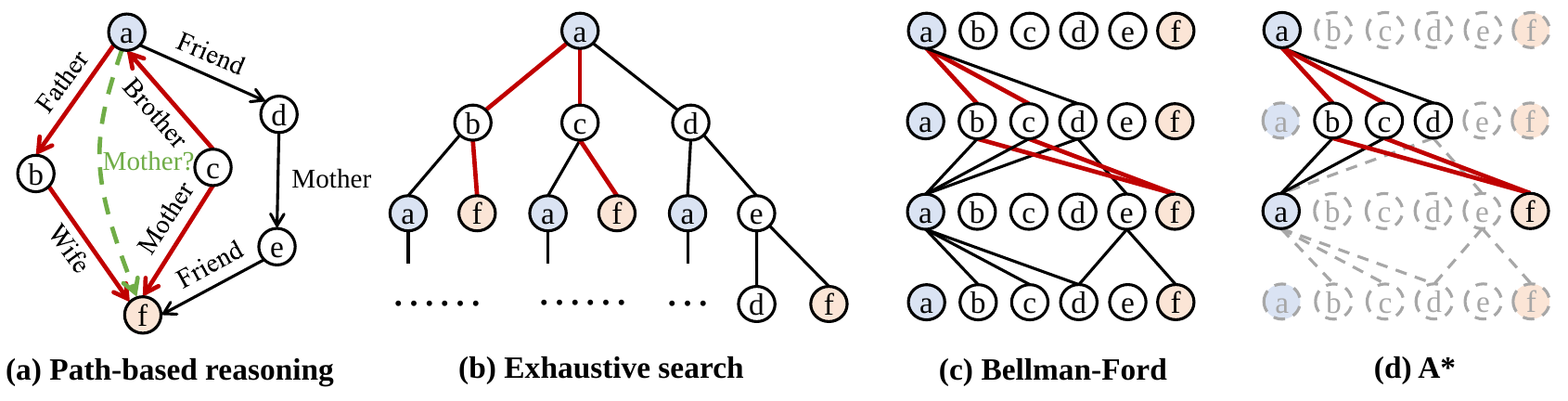}
    \vspace{-0.5em}
    \caption{\textbf{(a)} Given a query $(a, \textit{Mother}, ?)$, only a few important paths (showed in red) are necessary for reasoning. Note that paths can go in the reverse direction of relations. \textbf{(b)} Exhaustive search algorithm (e.g., Path-RNN, PathCon) enumerates all paths in exponential time. \textbf{(c)} Bellman-Ford algorithm (e.g., NeuralLP, DRUM, NBFNet, RED-GNN) computes all paths in polynomial time, but needs to propagate through all nodes and edges. \textbf{(d)} \method learns a priority function to select a subset of nodes and edges at each iteration, and avoids exploring all nodes and edges.}
    \label{fig:method_comparison}
    \vspace{-0.5em}
\end{figure*}

Another stream of works, path-based methods~\cite{lao2010relational, neelakantan2015compositional, chen2018variational, zhu2021neural}, predicts the relation between a pair of entities based on the paths between them. Take the knowledge graph in Fig.~\ref{fig:method_comparison}(a) as an example, we can prove that \emph{Mother(a, f)} holds, because there are two paths \emph{a $\xrightarrow{\text{Father}}$ b $\xrightarrow{\text{Wife}}$ f} and \emph{a $\xleftarrow{\text{Brother}}$ c $\xrightarrow{\text{Mother}}$ f}. As the semantics of paths are purely determined by relations rather than entities, path-based methods naturally generalize to unseen entities (i.e., inductive setting), which cannot be handled by embedding methods. However, the number of paths grows exponentially w.r.t.\ the path length, which hinders the application of path-based methods on large-scale knowledge graphs.

Here we propose \method to tackle the scalability issue of path-based methods. The key idea of our method is to search for important paths rather than use all possible paths for reasoning, thereby reducing time and memory in training and inference. Inspired by the A* algorithm~\cite{hart1968formal} for shortest path problems, given a head entity $u$ and a query relation $q$, we compute a priority score for each entity to guide the search towards more important paths. At each iteration, we select $K$ nodes and $L$ edges according to their priority, and use message passing to update nodes in their neighborhood. Due to the complex semantics of knowledge graphs, it is hard to use a handcrafted priority function like the A* algorithm without a significant performance drop (Tab.~\ref{tab:handcraft}). Instead, we design a neural priority function based on the node representations at the current iteration, which can be end-to-end trained by the objective function of the reasoning task without any additional supervision.

We verify our method on 4 transductive and 2 inductive knowledge graph reasoning datasets. Experiments show that \method achieves competitive performance against state-of-the-art path-based methods on FB15k-237, WN18RR and YAGO3-10, even with only 10\% of nodes and 10\% edges at each iteration (Sec.~\ref{sec:result}). To verify the scalability of our method, we also evaluate \method on ogbl-wikikg2, a million-scale knowledge graph that is 2 magnitudes larger than datasets solved by previous path-based methods. Surprisingly, with only 0.2\% nodes and 0.2\% edges, our method outperforms existing embedding methods and establishes new state-of-the-art results (Sec.~\ref{sec:result}) as the first non-embedding method on ogbl-wikikg2. By adjusting the ratios of selected nodes and edges, one can trade off between performance and efficiency (Sec.~\ref{sec:ablation}). \method also converges significantly faster than embedding methods (Fig.~\ref{fig:ogbl-wikikg2}), which makes it a promising model for deployment on large-scale knowledge graphs. Additionally, \method offers interpretability that embeddings do not possess. Visualization shows that \method captures important paths for reasoning (Sec.~\ref{sec:visualization}).
\section{Preliminary}
\label{sec:preliminary}

\paragraph{Knowledge Graph Reasoning}
A knowledge graph $\gG = (\gV, \gE, \gR)$ consists of sets of entities (nodes) $\gV$, facts (edges) $\gE$ and relation types $\gR$. Each fact is a triplet $(x, r, y) \in \gV \times \gR \times \gV$, which indicates a relation $r$ from entity $x$ to entity $y$. The task of knowledge graph reasoning aims at answering queries like $(u, q, ?)$ or $(?, q, u)$. Without loss of generality, we assume the query is $(u, q, ?)$, since $(?, q, u)$ equals to $(u, q^{-1}, ?)$ with $q^{-1}$ being the inverse of $q$. Given a query $(u, q, ?)$, we need to predict the answer set $\gV_{(u, q, ?)}$, such that $\forall v \in \gV_{(u, q, ?)}$ the triplet $(u, q, v)$ should be true.

\vspace{-0.5em}\paragraph{Path-based Methods}
Path-based methods~\cite{lao2010relational, neelakantan2015compositional, chen2018variational, zhu2021neural} solve knowledge graph reasoning by looking at the paths between a pair of entities in a knowledge graph. For example, a path \emph{a $\xrightarrow{\text{Father}}$ b $\xrightarrow{\text{Wife}}$ f} may be used to predict \emph{Mother(a, f)} in Fig.~\ref{fig:method_comparison}(a). From a representation learning perspective, path-based methods aim to learn a representation $\vh_q(u, v)$ to predict the triplet $(u, q, v)$ based on all paths $\gP_{u \leadsto v}$ from entity $u$ to entity $v$. Following the notation in \cite{zhu2021neural}\footnote{$\oplus$ and $\otimes$ are binary operations (akin to $+$, $\times$), while $\bigoplus$ and $\bigotimes$ are n-ary operations (akin to $\sum$, $\prod$).}, $\vh_q(u, v)$ is defined as
\begin{equation}
    \vh_q(u, v) = \bigoplus_{P \in \gP_{u \leadsto v}} \vh_q(P) = \bigoplus_{P \in \gP_{u \leadsto v}}\bigotimes_{(x, r, y) \in P}\vw_q(x, r, y)
    \label{eqn:path_formulation}
\end{equation}
where $\bigoplus$ is a permutation-invariant aggregation function over paths (e.g., sum or max), $\bigotimes$ is an aggregation function over edges that may be permutation-sensitive (e.g., matrix multiplication) and $\vw_q(x, r, y)$ is the representation of triplet $(x, r, y)$ conditioned on the query relation $q$. $\bigotimes$ is computed before $\bigoplus$. Typically, $\vw_q(x, r, y)$ is designed to be independent of the entities $x$ and $y$, which enables path-based methods to generalize to the inductive setting. However, it is intractable to compute Eqn.~\ref{eqn:path_formulation}, since the number of paths usually grows exponentially w.r.t.\ the path length.

\vspace{-0.5em}\paragraph{Path-based Reasoning with Bellman-Ford algorithm}
To reduce the time complexity of path-based methods, recent works~\cite{yang2017differentiable, sadeghian2019drum, zhu2021neural, zhang2022knowledge} borrow the Bellman-Ford algorithm~\cite{bellman1958routing} from shortest path problems to solve path-based methods. Instead of enumerating each possible path, the Bellman-Ford algorithm iteratively propagates the representations of $t-1$ hops to compute the representations of $t$ hops, which achieves a polynomial time complexity. Formally, let $\vh_q^{(t)}(u, v)$ be the representation of $t$ hops. The Bellman-Ford algorithm can be written as
\begin{align}
    &\vh_q^{(0)}(u,v) \leftarrow \mathbbm{1}_q(u = v)
    \label{eqn:boundary} \\
    &\vh_q^{(t)}(u, v) \leftarrow \vh_q^{(0)}(u, v) \oplus \bigoplus_{(x, r, v) \in \gE(v)} \vh_q^{(t-1)}(u, x) \otimes \vw_q(x, r, v)
    \label{eqn:bellman_ford}
\end{align}
where $\mathbbm{1}_q$ is a learnable indicator function that defines the representations of $0$ hops $\vh_q^{(0)}(u, v)$, also known as the boundary condition of the Bellman-Ford algorithm. $\gE(v)$ is the neighborhood of node $v$. Despite the polynomial time complexity achieved by the Bellman-Ford algorithm, Eqn.~\ref{eqn:bellman_ford} still needs to visit $|\gV|$ nodes and $|\gE|$ edges to compute $\vh_q^{(t)}(u, v)$ for all $v \in \gV$ in each iteration, which is not feasible for large-scale knowledge graphs.

\vspace{-0.5em}\paragraph{A* Algorithm}
\label{sec:A*}
A* algorithm~\cite{hart1968formal} is an extension of the Bellman-Ford algorithm for shortest path problems. Unlike the Bellman-Ford algorithm that propagates through every node uniformly, the A* algorithm prioritizes propagation through nodes with higher priority according to a heuristic function specified by the user. With an appropriate heuristic function, A* algorithm can reduce the search space of paths. Formally, with the notation from Eqn.~\ref{eqn:path_formulation}, the priority function for node $x$ is
\begin{equation}
    s(x) = d(u, x) \otimes g(x, v)
    \label{eqn:priority_func}
\end{equation}
where $d(u, x)$ is the length of current shortest path from $u$ to $x$, and $g(x, v)$ is a heuristic function estimating the cost from $x$ to the target node $v$. For instance, for a grid-world shortest path problem (Fig.~\ref{fig:correspondence}(a)), $g(x, v)$ is usually defined as the $L_1$ distance from $x$ to $v$, $\otimes$ is the addition operator, and $s(x)$ is a lower bound for the shortest path length from $u$ to $v$ through $x$. During each iteration, the A* algorithm prioritizes propagation through nodes with smaller $s(x)$.
\section{Proposed Method}
\label{sec:method}

We propose \method to scale up path-based methods with the A* algorithm. We show that the A* algorithm can be derived from the observation that only a small set of paths are important for reasoning (Sec.~\ref{sec:method_A*}). Since it is hard to handcraft a good priority function for knowledge graph reasoning (Tab.~\ref{tab:handcraft}), we design a neural priority function, and train it end-to-end for reasoning (Sec.~\ref{sec:method_neural}).

\subsection{Path-based Reasoning with A* Algorithm}
\label{sec:method_A*}

As discussed in Sec.~\ref{sec:preliminary}, the Bellman-Ford algorithm visits all $|\gV|$ nodes and $|\gE|$ edges. However, in real-world knowledge graphs, only a small portion of paths is related to the query. Based on this observation, we introduce the concept of important paths. We then show that the representations of important paths can be iteratively computed with the A* algorithm under mild assumptions.

\paragraph{Important Paths for Reasoning} Given a query relation and a pair of entities, only some of the paths between the entities are important for answering the query. Consider the example in Fig.~\ref{fig:method_comparison}(a), the path \emph{a $\xrightarrow{\text{Friend}}$ d $\xrightarrow{\text{Mother}}$ e $\xrightarrow{\text{Friend}}$ f} cannot determine whether \emph{f} is an answer to \emph{Mother(a, ?)} due to the use of the \emph{Friend} relation in the path. On the other hand, kinship paths like \emph{a $\xrightarrow{\text{Father}}$ b $\xrightarrow{\text{Wife}}$ f} or \emph{a $\xleftarrow{\text{Brother}}$ c $\xrightarrow{\text{Mother}}$ f} are able to predict that \emph{Mother(a, f)} is true. Formally, we define $\gP_{u \leadsto v|q} \subseteq \gP_{u \leadsto v}$ to be the set of paths from $u$ to $v$ that is important to the query relation $q$. Mathematically, we have
\begin{equation}
    \vh_q(u, v) = \bigoplus_{P \in \gP_{u \leadsto v}}\vh_q(P) \approx \bigoplus_{P \in \gP_{u \leadsto v|q}}\vh_q(P)
    \label{eqn:important_path}
\end{equation}
In other words, any path $P \in \gP_{u \leadsto v}\setminus\gP_{u \leadsto v|q}$ has negligible contribution to $\vh_q(u, v)$. In real-world knowledge graphs, the number of important paths $|\gP_{u \leadsto v|q}|$ may be several orders of magnitudes smaller than the number of paths $|\gP_{u \leadsto v}|$~\cite{chen2018variational}. If we compute the representation $\vh_q(u, v)$ using only the important paths, we can scale up path-based reasoning to large-scale knowledge graphs.

\begin{wrapfigure}{r}{0.45\textwidth}
    \vspace{-1.5em}
    \centering
    \includegraphics[width=0.45\textwidth]{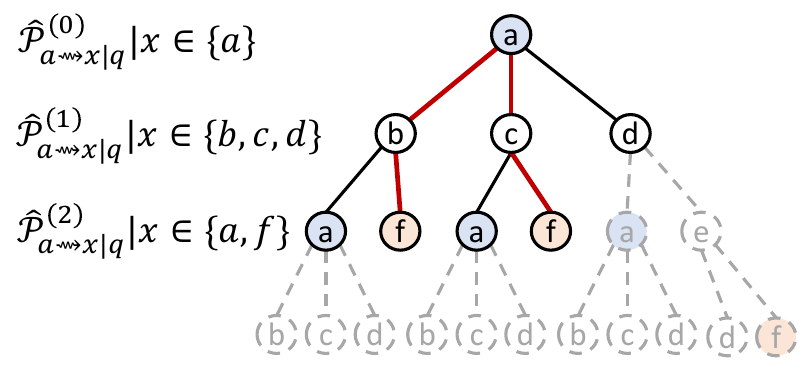}
    \vspace{-0.5em}
    \caption{The colored paths are important paths $\gP_{u \leadsto v|q}$, while the solid paths are the superset $\hat{\gP}_{u \leadsto v|q}$ used in Eqn.~\ref{eqn:path_selection}.}
    \label{fig:recrusive}
    \vspace{-1em}
\end{wrapfigure}
\paragraph{Iterative Computation of Important Paths}
Given a query $(u, q, ?)$, we need to discover the set of important paths $\gP_{u \leadsto v|q}$ for all $v \in \gV$. However, it is challenging to extract important paths from $\gP_{u \leadsto v}$, since the size of $\gP_{u \leadsto v}$ is exponentially large. Our solution is to explore the structure of important paths and compute them iteratively. We first show that we can cover important paths with iterative path selection (Eqn.~\ref{eqn:path_boundary} and \ref{eqn:path_selection}). Then we approximate iterative path selection with iterative node selection (Eqn.~\ref{eqn:node_selection}).

Notice that paths in $\gP_{u \leadsto v}$ form a tree structure (Fig.~\ref{fig:recrusive}). On the tree, a path is not important if any prefix of this path is not important for the query. For example, in Fig.~\ref{fig:method_comparison}(a), \emph{a $\xrightarrow{\text{Friend}}$ d $\xrightarrow{\text{Mother}}$ e $\xrightarrow{\text{Friend}}$ f} is not important, as its prefix \emph{a $\xrightarrow{\text{Friend}}$ d} is not important for the query \emph{Mother}. Therefore, we assume there exists a path selection function $m_q: 2^\gP \mapsto 2^\gP$ that selects important paths from a set of paths given the query relation $q$. $2^\gP$ is the set of all subsets of $\gP$. With $m_q$, we construct the following set of paths $\hat{\gP}_{u \leadsto v|q}^{(t)}$ iteratively
\begin{align}
    &\hat{\gP}_{u \leadsto v|q}^{(0)} \leftarrow \{(u, \text{self loop}, v)\}~\text{if}~u = v~\text{else}~\varnothing 
    \label{eqn:path_boundary} \\
    &\hat{\gP}_{u \leadsto v|q}^{(t)} \leftarrow \bigcup_{\substack{x \in \gV \\ (x,r,v) \in \gE(v)}} \left\{P + \{(x,r,v)\} \middle| P \in m_q(\hat{\gP}_{u \leadsto x|q}^{(t-1)})\right\}
    \label{eqn:path_selection}
\end{align}
where $P + \{(x,r,v)\}$ concatenates the path $P$ and the edge $(x, r, v)$. The paths $\hat{\gP}_{u \leadsto v|q}^{(t)}$ computed by the above iteration is a superset of the important paths $\gP_{u \leadsto v|q}^{(t)}$ of length $t$ (see Thm.~\ref{thm:superset} in App.~\ref{app:method_A*}). Due to the tree structure of paths, the above iterative path selection still requires exponential time. Hence we further approximate iterative path selection with iterative node selection, by assuming paths with the same length and the same stop node can be merged. The iterative node selection replacing Eqn.~\ref{eqn:path_selection} is (see Prop.~\ref{prop:node_selection} in App.~\ref{app:method_A*})
\begin{align}
    \hat{\gP}_{u \leadsto v|q}^{(t)} \leftarrow \bigcup_{\substack{x \in n_{uq}^{(t-1)}(\gV) \\ (x,r,v) \in \gE(v)}} \left\{P + \{(x,r,v)\} \middle| P \in \hat{\gP}_{u \leadsto x|q}^{(t-1)}\right\}
    \label{eqn:node_selection}
\end{align}
where $n_{uq}^{(t)}: 2^\gV \mapsto 2^\gV$ selects ending nodes of important paths of length $t$ from a set of nodes.

\vspace{-0.5em}
\paragraph{Reasoning with A* Algorithm}
Eqn.~\ref{eqn:node_selection} iteratively computes the set of important paths $\hat{\gP}_{u \leadsto v|q}$. In order to perform reasoning, we need to compute the representation $\vh_q(u, v)$ based on the important paths, which can be achieved by an iterative process similar to Eqn.~\ref{eqn:node_selection} (see Thm.~\ref{thm:A*} in App.~\ref{app:method_A*})
\begin{align}
    \vh_q^{(t)}(u, v) \leftarrow &\vh_q^{(0)}(u, v) \oplus \bigoplus_{\substack{x \in n_{uq}^{(t-1)}(\gV) \\ (x, r, v)\in \gE(v)}} \vh_q^{(t-1)}(u, x) \otimes \vw_q(x, r, v)
    \label{eqn:A*}
\end{align}
Eqn.~\ref{eqn:A*} is the A* iteration (Fig.~\ref{fig:method_comparison}(d)) for path-based reasoning. Note the A* iteration uses the same boundary condition as Eqn.~\ref{eqn:boundary}. Inspired by the classical A* algorithm, we parameterize $n_{uq}^{(t)}(\gV)$ with a node priority function $s_{uq}^{(t)}: \gV \mapsto [0, 1]$ and select top-$K$ nodes based on their priority. However, there does not exist an oracle for the priority function $s_{uq}^{(t)}(x)$. We will discuss how to learn the priority function $s_{uq}^{(t)}(x)$ in the following sections.

\subsection{Path-based Reasoning with \method}
\label{sec:method_neural}

\begin{wrapfigure}{r}{0.47\textwidth}
    \centering
    \vspace{-1.3em}
    \includegraphics[width=0.465\textwidth]{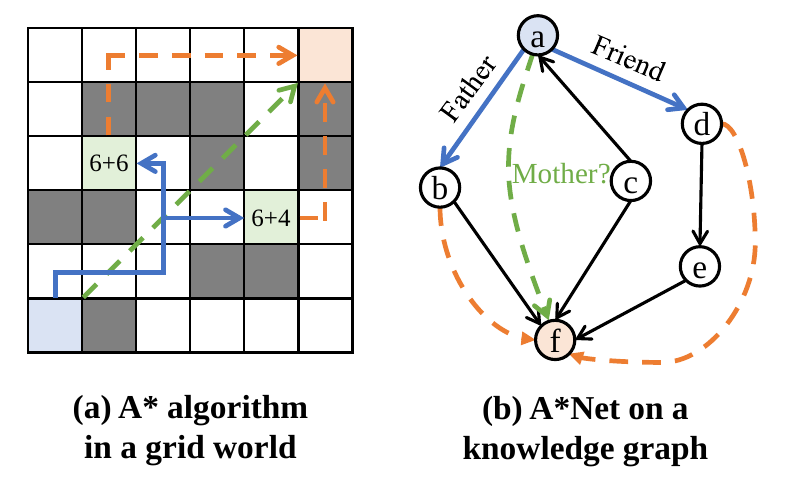}
    \caption{\textbf{(a)} A* algorithm computes the current distance $d(u, x)$ (blue), estimates the remaining distance $g(x, v)$ (orange), and prioritizes shorter paths. \textbf{(b)} A*Net computes the current representations $\vh_q^{(t)}(u, x)$ (blue), estimates the remaining representations $\vg([\vh_q^{(t)}(u, x), \vq])$ (orange) based on the query $\vq$ (green), and prioritizes paths more relevant to the query.}
    \vspace{-2em}
    \label{fig:correspondence}
\end{wrapfigure}

Both the performance and the efficiency of the A* algorithm heavily rely on the heuristic function. While it is straightforward to use $L_1$ distance as the heuristic function for grid-world shortest path problems, it is not clear what a good priority function for knowledge graph reasoning is due to the complex relation semantics in knowledge graphs. Indeed, our experiments suggest that handcrafted priority functions largely hurt the performance of path-based methods (Tab.~\ref{tab:handcraft}). In this section, we discuss a neural priority function, which can be end-to-end trained by the reasoning task.

\paragraph{Neural Priority Function}
To design the neural priority function $s_{uq}(x)$, we draw inspiration from the priority function in the A* algorithm for shortest path problems (Eqn.~\ref{eqn:priority_func}). The priority function has two terms $d(u, x)$ and $g(x, v)$, where $d(u, x)$ is the current distance from node $u$ to $x$, and $g(x, v)$ estimates the remaining distance from node $x$ to $v$.

From a representation learning perspective, we need to learn a representation $\vs_{uq}(x)$ to predict the priority score $s_{uq}(x)$ for each node $x$. Inspired by Eqn.~\ref{eqn:priority_func}, we use the current representation $\vh_q^{(t)}(u, x)$ to represent $d^{(t)}(u, x)$. However, it is challenging to find a representation for $g^{(t)}(x, v)$, since we do not know the answer entity $v$ beforehand. Noticing that in the A* algorithm, the target node $v$ can be expressed by the source node plus a displacement (Fig.~\ref{fig:correspondence}(a)), we reparameterize the answer entity $v$ with the head entity $u$ and the query relation $q$ in \method. By replacing $g^{(t)}(x, v)$ with another function $g^{(t)}(u, x, q)$, the representation $\vs_{uq}(x)$ is parameterized as
\begin{equation}
    \vs_{uq}^{(t)}(x) = \vh_q^{(t)}(u, x) \otimes \vg([\vh_q^{(t)}(u, x), \vq])
    \label{eqn:priority_repr}
\end{equation}
where $\vg(\cdot)$ is a feed-forward network that outputs a vector representation and $[\cdot, \cdot]$ concatenates two representations. Intuitively, the learned representation $\vq$ captures the semantic of query relation $q$, which serves the goal for answering query $(u, q, ?)$. The function $\vg([\vh_q^{(t)}(u, x), \vq])$ compares the current representation $\vh_q^{(t)}(u, x)$ with the goal $\vq$ to estimate the remaining representation (Fig.~\ref{fig:correspondence}(b)). If $\vh_q^{(t)}(u, x)$ is close to $\vq$, the remaining representation will be close to 0, and $x$ is likely to be close to the correct answer. The final priority score is predicted by
\begin{equation}
    s_{uq}^{(t)}(x) = \sigma(f(\vs_{uq}^{(t)}(x)))
    \label{eqn:neural_func}
\end{equation}
where $f(\cdot)$ is a feed-forward network and $\sigma$ is the sigmoid function that maps the output to $[0, 1]$.

\paragraph{Learning} To learn the neural priority function, we incorporate it as a weight for each message in the A* iteration. For simplicity, let $\gX^{(t)} = n_{uq}^{(t-1)}(\gV)$ be the nodes we try to propagate through at $t$-th iteration. We modify Eqn.~\ref{eqn:A*} to be
\begin{equation}
    \vh_q^{(t)}(u, v) \leftarrow \vh_q^{(0)}(u, v) \oplus \bigoplus_{\substack{x \in \gX^{(t)} \\ (x, r, v)\in \gE(v)}} s_{uq}^{(t-1)}(x)\left(\vh_q^{(t-1)}(u, x) \otimes \vw_q(x, r, v)\right)
    \label{eqn:weight}
\end{equation}
Eqn.~\ref{eqn:weight} encourages the model to learn larger weights $s_{uq}^{(t)}(x)$ for nodes that are important for reasoning. In practice, as some nodes may have very large degrees, we further select top-$L$ edges from the neighborhood of $n_{uq}^{(t-1)}(\gV)$ (see App.~\ref{app:edge_selection}). A pseudo code of \method is illustrated in Alg.~\ref{alg:A*Net}. Note the top-$K$ and top-$L$ functions are not differentiable.

Nevertheless, it is still too challenging to train the neural priority function, since we do not know the ground truth for important paths, and there is no direct supervision for the priority function. Our solution is to share the weights between the priority function and the predictor for the reasoning task. The intuition is that the reasoning task can be viewed as a weak supervision for the priority function. Recall that the goal of $s_{uq}^{(t)}(x)$ is to determine whether there exists an important path from $u$ to $x$ (Eqn.~\ref{eqn:node_selection}). In the reasoning task, any positive answer entity must be present on at least one important path, while negative answer entities are less likely to be on important paths. Our ablation experiment demonstrates that sharing weights improve the performance of neural priority function (Tab.~\ref{tab:share_weights}). Following \cite{sun2019rotate}, \method is trained to minimize the binary cross entropy loss over triplets
\begin{equation}
    \gL = - \log p(u, q, v) - \sum_{i=1}^{n} \frac{1}{n} \log (1 - p(u_i', q, v_i'))
    \label{eqn:training_loss}
\end{equation}
where $(u, q, v)$ is a positive sample and $\{(u_i', q, v_i')\}_{i=1}^n$ are negative samples. Each negative sample $(u_i, q, v_i)$ is generated by corrupting the head or the tail in a positive sample.
\begin{wrapfigure}{R}{0.48\textwidth}
\begin{minipage}{0.48\textwidth}
    \vspace{-2.6em}
    \begin{algorithm}[H]
        \footnotesize
        \captionsetup{font=footnotesize}\caption{\method}
        \textbf{Input:} head entity $u$, query relation $q$, \#iterations $T$ \\
        \textbf{Output:} $p(v|u, q)$ for all $v \in \gV$
        \begin{algorithmic}[1]
            \For{$v \in \gV$}
                \State{$\vh_q^{(0)}(u, v) \gets \mathbbm{1}_q(u = v)$}
            \EndFor
            \For{$t \gets 1$ to $T$}
                \State{$\gX^{(t)} \gets \textrm{TopK}(s_{uq}^{(t-1)}(x) | x \in \gV)$}
                \State{$\gE^{(t)} \gets \bigcup_{x \in \gX^{(t)}} \gE(x) $}
                \State{$\gE^{(t)} \gets \textrm{TopL}(s_{uq}^{(t-1)}(v) | (x, r, v) \in \gE^{(t)})$}
                \State{$\gV^{(t)} \gets \bigcup_{(x, r, v) \in \gE^{(t)}} \{v\}$}
                \For{$v \in \gV^{(t)}$}
                    \State{Compute $\vh_q^{(t)}(u, v)$ with Eqn.~\ref{eqn:weight}}
                    \State{Compute priority $s_{uq}^{(t)}(v)$ with Eqn.~\ref{eqn:priority_repr}, \ref{eqn:neural_func}}
                \EndFor
            \EndFor
            \LineComment{Share weights between $s_{uq}(v)$ and the predictor}
            \State{\Return $s_{uq}^{(T)}(v)$ as $p(v|u, q)$ for all $v \in \gV$}
        \end{algorithmic}
        \label{alg:A*Net}
    \end{algorithm}
    \vspace{-2.5em}
\end{minipage}
\end{wrapfigure}
\paragraph{Efficient Implementation with Padding-Free Operations}
Modern neural networks heavily rely on batched execution to unleash the parallel capacity of GPUs. While Alg.~\ref{alg:A*Net} is easy to implement for a single sample $(u, q, ?)$, it is not trivial to batch \method for multiple samples. The challenge is that different samples may have very different sizes for nodes $\gV^{(t)}$ and edges $\gE^{(t)}$. A common approach is to pad the set of nodes or edges to a predefined constant, which would severely counteract the acceleration brought by \method.

Here we introduce padding-free $topk$ operation to avoid the overhead in batched execution. The key idea is to convert batched execution of different small samples into execution of a single large sample, which can be paralleled by existing operations in deep learning frameworks. For example, the batched execution of $topk([[1, 3], [2, 1, 0]])$ can be converted into a multi-key sort problem over $[[0, 1], [0, 3], [1, 2], [1, 1], [1, 0]]$, where the first key is the index of the sample in the batch and the second key is the original input. The multi-key sort is then implemented by composing stable single-key sort operations in deep learning frameworks. See App.~\ref{app:padding_free} for details.
\section{Experiments}
\label{sec:experiment}

We evaluate \method on standard transductive and inductive knowledge graph reasoning datasets, including a million-scale one ogbl-wikikg2. We conduct ablation studies to verify our design choices and visualize the important paths learned by the priority function in \method.

\subsection{Experiment Setup}
\label{sec:exp_setup}

\paragraph{Datasets \& Evaluation} We evaluate \method on 4 standard knowledge graphs, FB15k-237~\cite{toutanova2015observed}, WN18RR~\cite{dettmers2018convolutional}, YAGO3-10~\cite{mahdisoltani2014yago3} and ogbl-wikikg2~\cite{hu2021ogb}. For the transductive setting, we use the standard splits from their original works~\cite{toutanova2015observed, dettmers2018convolutional}. For the inductive setting, we use the splits provided by \cite{teru2020inductive}, which contains 4 different versions for each dataset. As for evaluation, we use the standard filtered ranking protocol~\cite{bordes2013translating} for knowledge graph reasoning. Each triplet $(u, q, v)$ is ranked against all negative triplets $(u, q, v')$ or $(u', q, v)$ that are not present in the knowledge graph. We measure the performance with mean reciprocal rank (MRR) and HITS at K (H@K). Efficiency is measured by the average number of messages (\#message) per step, wall time per epoch and memory cost. To plot the convergence curves for each model, we dump checkpoints during training with a high frequency, and evaluate the checkpoints later on the validation set. See more details in App.~\ref{app:dataset}.

\paragraph{Implementation Details}
Our work is developed based on the open-source codebase of path-based reasoning with Bellman-Ford algorithm\footnote{\url{https://github.com/DeepGraphLearning/NBFNet}. MIT license. \label{fn:codebase}}. For a fair comparison with existing path-based methods, we follow the implementation of NBFNet~\cite{zhu2021neural} and parameterize $\bigoplus$ with principal neighborhood aggregation (PNA)~\cite{corso2020principal} or sum aggregation, and parameterize $\bigotimes$ with the relation operation from DistMult~\cite{yang2015embedding}, i.e., vector multiplication. The indicator function (Eqn.~\ref{eqn:boundary}) $\mathbbm{1}_q(u = v) = \mathbbm{1}(u = v)\vq$ is parameterized with a query embedding $\vq$ for all datasets except ogbl-wikikg2, where we augment the indicator function with learnable embeddings based on a soft distance from $u$ to $v$ (see App.~\ref{app:implementation} for more details). The edge representation (Eqn.~\ref{eqn:weight}) $\vw_q(x, r, v) = \mW_r \vq + \vb_r$ is parameterized as a linear function over the query relation $q$ for all datasets except WN18RR, where we use a simple embedding $\vw_q(x, r, v) = \vr$. We use the same preprocessing steps as in \cite{zhu2021neural}, including augmenting each triplet with a flipped triplet, and dropping out query edges during training.

For the neural priority function, we have two hyperparameters: $K$ for the maximum number of nodes and $L$ for the maximum number of edges. To make hyperparameter tuning easier, we define maximum node ratio $\alpha=K/|\gV|$ and maximum average degree ratio $\beta=L|\gV|/K|\gE|$, and tune the ratios for each dataset. The maximum edge ratio is determined by $\alpha\beta$. The other hyperparameters are kept the same as the values in \cite{zhu2021neural}. We train \method with 4 Tesla A100 GPUs (40 GB), and select the best model based on validation performance. See App.~\ref{app:implementation} for more details.

\vspace{-0.5em}
\paragraph{Baselines} We compare \method against embedding methods, GNNs and path-based methods. The embedding methods are TransE~\cite{bordes2013translating}, ComplEx~\cite{trouillon2016complex}, RotatE~\cite{sun2019rotate}, HAKE~\cite{zhang2020learning}, RotH~\cite{chami2020low}, PairRE~\cite{chao2021pairre}, ComplEx+Relation Prediction~\cite{chen2021relation} and ConE~\cite{bai2021modeling}. The GNNs are RGCN~\cite{schlichtkrull2018modeling}, CompGCN~\cite{vashishth2020composition} and GraIL~\cite{teru2020inductive}. The path-based methods are MINERVA~\cite{das2017go}, Multi-Hop~\cite{lin2018multi}, CURL~\cite{zhang2022learning}, NeuralLP~\cite{yang2017differentiable}, DRUM~\cite{sadeghian2019drum}, NBFNet~\cite{zhu2021neural} and RED-GNN~\cite{zhang2022knowledge}. Note that path-finding methods~\cite{das2017go, lin2018multi, zhang2022learning} that use reinforcement learning and assume sparse answers can only be evaluated on tail prediction. Training time of all baselines are measured based on their official open-source implementations, except that we use a more recent implementation\footnote{\url{https://github.com/DeepGraphLearning/KnowledgeGraphEmbedding}. MIT license.} of TransE and ComplEx.

\subsection{Main Results}
\label{sec:result}

\begin{table}[t]
    \centering
    \begin{minipage}[b]{0.63\textwidth}
        \centering
        \footnotesize
        \caption{Performance on transductive knowledge graph reasoning. Results of embedding methods are from \cite{bai2021modeling}. Results of GNNs and path-based methods are from \cite{zhu2021neural}. Performance and efficiency on YAGO3-10 are in App.~\ref{app:experiment}.}
        \begin{adjustbox}{max width=\textwidth}
            \begin{tabular}{lcccccccc}
                \toprule
                \multirow{2}{*}{\bf{Method}} & \multicolumn{4}{c}{\bf{FB15k-237}} & \multicolumn{4}{c}{\bf{WN18RR}} \\
                & MRR & H@1 & H@3 & H@10 & MRR & H@1 & H@3 & H@10 \\
                \midrule
                TransE & 0.294 & - & - & 0.465 & 0.226 & - & 0.403 & 0.532 \\
                RotatE & 0.338 & 0.241 & 0.375 & 0.533 & 0.476 & 0.428 & 0.492 & 0.571 \\
                HAKE & 0.341 & 0.243 & 0.378 & 0.535 & 0.496 & 0.451 & 0.513 & 0.582 \\
                RotH & 0.344 & 0.246 & 0.380 & 0.535 & 0.495 & 0.449 & 0.514 & 0.586 \\
                ComplEx+RP & 0.388 & 0.298 & 0.425 & 0.568 & 0.488 & 0.443 & 0.505 & 0.578 \\
                ConE & 0.345 & 0.247 & 0.381 & 0.540 & 0.496 & 0.453 & 0.515 & 0.579 \\
                \midrule
                RGCN & 0.273 & 0.182 & 0.303 & 0.456 & 0.402 & 0.345 & 0.437 & 0.494 \\
                CompGCN & 0.355 & 0.264 & 0.390 & 0.535 & 0.479 & 0.443 & 0.494 & 0.546 \\
                \midrule
                NeuralLP & 0.240 & - & - & 0.362 & 0.435 & 0.371 & 0.434 & 0.566 \\
                DRUM & 0.343 & 0.255 &0.378 & 0.516 & 0.486 & 0.425 & 0.513 & 0.586 \\
                NBFNet & \bf{0.415} & \bf{0.321} & \bf{0.454} & \bf{0.599} & \bf{0.551} & \bf{0.497} & \bf{0.573} & \bf{0.666} \\
                RED-GNN & 0.374 & 0.283 & - & 0.558 & 0.533 & 0.485 & - & 0.624 \\
                \method & \bf{0.411} & \bf{0.321} & \bf{0.453} & 0.586 & \bf{0.549} & \bf{0.495} & \bf{0.573} & \bf{0.659} \\
                \bottomrule
            \end{tabular}
        \end{adjustbox}
        \label{tab:transductive}
        \caption{Tail prediction performance on transductive knowledge graphs. Results of compared methods are from \cite{das2017go, lin2018multi, zhang2022learning}.}
        \begin{adjustbox}{max width=\textwidth}
            \begin{tabular}{lcccccccc}
                \toprule
                \multirow{2}{*}{\bf{Method}} & \multicolumn{4}{c}{\bf{FB15k-237}} & \multicolumn{4}{c}{\bf{WN18RR}} \\
                & MRR & H@1 & H@3 & H@10 & MRR & H@1 & H@3 & H@10 \\
                \midrule
                MINERVA & 0.293 & 0.217 & 0.329 & 0.456 & 0.448 & 0.413 & 0.456 & 0.513 \\
                Multi-Hop & 0.393 & 0.329 & - & 0.544 & 0.472 & 0.437 & - & 0.542 \\
                CURL & 0.306 & 0.224 & 0.341 & 0.470 & 0.460 & 0.429 & 0.471 & 0.523 \\
                NBFNet & \bf{0.509} & \bf{0.411} & \bf{0.562} & \bf{0.697} & \bf{0.557} & \bf{0.503} & \bf{0.579} & \bf{0.669} \\
                \method & \bf{0.505} & \bf{0.410} & \bf{0.556} & 0.687 & \bf{0.557} & \bf{0.504} & \bf{0.580} & \bf{0.666} \\
                \bottomrule
            \end{tabular}
        \end{adjustbox}
        \label{tab:tail_prediction}
        \caption{Efficiency on transductive knowledge graph reasoning.}
        \begin{adjustbox}{max width=\textwidth}
            \begin{tabular}{lcccccc}
                \toprule
                \multirow{2}{*}{\bf{Method}} &  \multicolumn{3}{c}{\bf{FB15k-237}} & \multicolumn{3}{c}{\bf{WN18RR}} \\
                & \#message & time & memory & \#message & time & memory \\
                \midrule
                NBFNet & 544,230 & 16.8 min & 19.1 GiB & 173,670 & 9.42 min &  26.4 GiB \\
                \method & 38,610 & 8.07 min & 11.1 GiB & 4,049 & 1.39 min &  5.04 GiB \\
                \midrule
                Improvement & 14.1$\times$ & 2.1$\times$ & 1.7$\times$ & 42.9$\times$ & 6.8$\times$ & 5.2$\times$ \\
                \bottomrule
            \end{tabular}
        \end{adjustbox}
        \label{tab:transductive_efficiency}
    \end{minipage}
    \hspace{0.8em}
    \begin{minipage}[b]{0.335\textwidth}
        \centering
        \includegraphics[width=\textwidth]{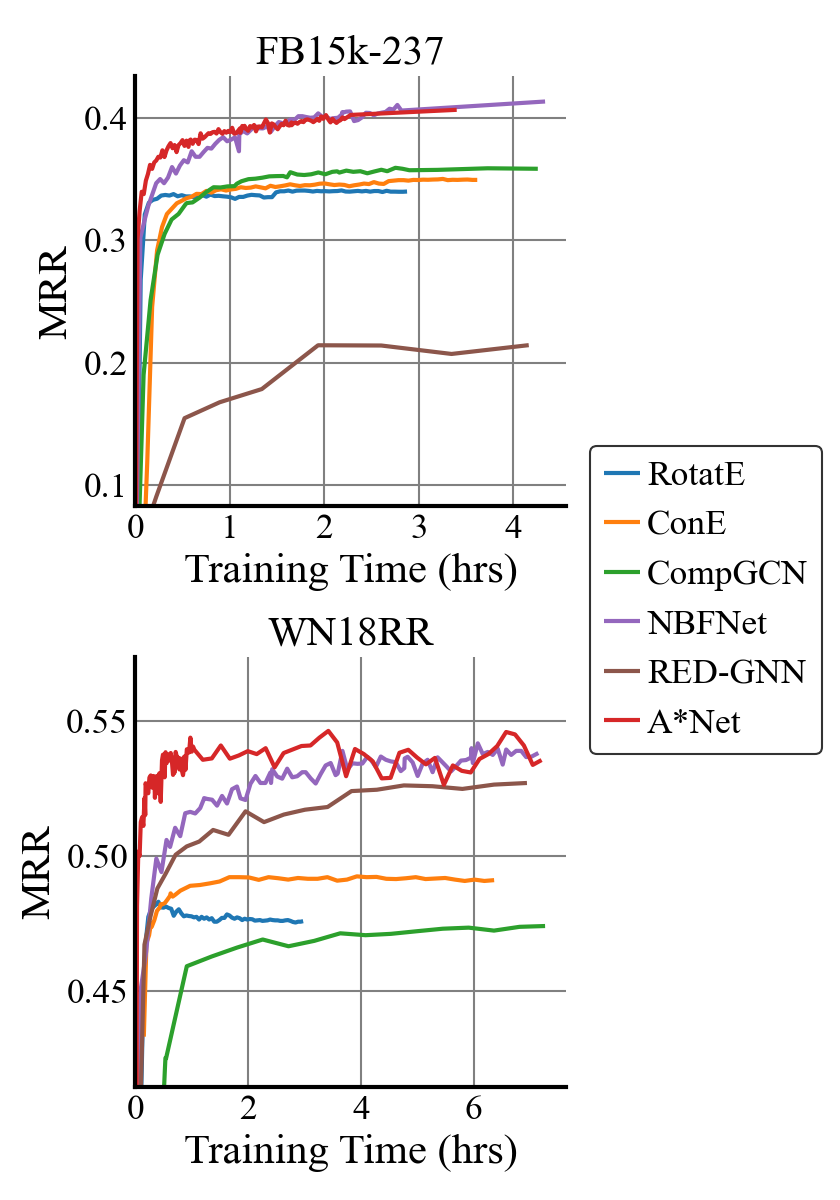}
        \captionof{figure}{Validation MRR w.r.t.\ training time (1 A100 GPU).}
        \label{fig:convergence}
        \vspace{0.6em}
        \captionof{table}{Performance on ogbl-wikikg2 (MRR). Results of compared methods are from \cite{chao2021pairre, chen2021relation}.}
        \footnotesize
        \begin{adjustbox}{max width=\textwidth}
            \begin{tabular}{lccc}
                \toprule
                \multirow{2}{*}{\bf{Method}} & \multicolumn{3}{c}{\bf{ogbl-wikikg2}} \\
                & Test & Valid & \#Params \\
                \midrule
                TransE & 0.4256 & 0.4272 & 1,251 M \\
                ComplEx & 0.4027 & 0.3759 & 1,251 M \\
                RotatE & 0.4332 & 0.4353 & 1,251 M \\
                PairRE & 0.5208 & 0.5423 & 500 M \\
                ComplEx+RP & 0.6392 & 0.6561 & 250 M \\
                \midrule
                NBFNet & OOM & OOM & OOM \\
                \method & \bf{0.6767} & \bf{0.6851} & \bf{6.83 M} \\
                \bottomrule
            \end{tabular}
        \end{adjustbox}
        \label{tab:large-scale}
    \end{minipage}
    \vspace{-1em}
\end{table}

Tab.~\ref{tab:transductive} shows that \method outperforms all embedding methods and GNNs, and is on par with NBFNet on transductive knowledge graph reasoning. We also observe a similar trend of \method and NBFNet over path-finding methods on tail prediction (Tab.~\ref{tab:tail_prediction}). Since path-finding methods select only one path with reinforcement learning, such results imply the advantage of aggregating multiple paths in \method. \method also converges faster than all the other methods (Fig.~\ref{fig:convergence}). Notably, unlike NBFNet that propagates through all nodes and edges, \method only propagates through 10\% nodes and 10\% edges on both datasets, which suggests that most nodes and edges are not important for path-based reasoning. Tab.~\ref{tab:transductive_efficiency} shows that \method reduces the number of messages by 14.1$\times$ and 42.9$\times$ compared to NBFNet on two datasets respectively. Note that the reduction in time and memory is less than the reduction in the number of messages, since \method operates on subgraphs with dynamic sizes and is harder to parallel than NBFNet on GPUs. We leave better parallel implementation as future work.

Tab.~\ref{tab:large-scale} shows the performance on ogbl-wikikg2, which has 2.5 million entities and 16 million triplets. While NBFNet faces out-of-memory (OOM) problem even for a batch size of 1, \method can perform reasoning by propagating through 0.2\% nodes and 0.2\% edges at each step. Surprisingly, even with such sparse propagation, \method outperforms embedding methods and achieves a new state-of-the-art result. Moreover, the validation curve in Fig.~\ref{fig:ogbl-wikikg2} shows that \method converges significantly faster than embedding methods. Since \method only learns parameters for relations but not entities, it only uses 6.83 million parameters, which is 36.6$\times$ less than the best embedding method ComplEx+RP.

Tab.~\ref{tab:inductive} shows the performance on inductive knowledge graph reasoning. \method is on par with NBFNet and significantly outperforms all the other methods. Note that embedding methods cannot deal with the inductive setting. Other metrics (H@1, H@10) and efficiency results are in App.~\ref{app:experiment}.

\subsection{Ablation Studies}
\label{sec:ablation}

\paragraph{Priority Function} To verify the effectiveness of neural priority function, we compare it against three handcrafted priority functions: personalized PageRank (PPR), Degree and Random. PPR selects nodes with higher PPR scores w.r.t.\ the query head entity $u$. Degree selects nodes with larger degrees, while Random selects nodes uniformly. Tab.~\ref{tab:ablation} shows that the neural priority function outperforms all three handcrafted priority functions, suggesting the necessity of learning a neural priority function.

\vspace{-0.5em}
\paragraph{Sharing Weights} As discussed in Sec.~\ref{sec:method_neural}, we share the weights between the neural priority function and the reasoning predictor to help train the neural priority function. Tab.~\ref{tab:ablation} compares \method trained with and without sharing weights. It can be observed that sharing weights is essential to training a good neural priority function in \method.

\vspace{-0.5em}
\paragraph{Trade-off between Performance and Efficiency} While \method matches the performance of NBFNet in less training time, one may further trade off performance and efficiency in \method by adjusting the ratios $\alpha$ and $\beta$. Fig.~\ref{fig:trade_off} plots curves of performance and speedup ratio w.r.t.\ different $\alpha$ and $\beta$. If we can accept a performance similar to embedding methods (e.g., ConE~\cite{bai2021modeling}), we can set either $\alpha$ to 1\% or $\beta$ to 10\%, resulting in 8.7$\times$ speedup compared to NBFNet.

\subsection{Visualization of Learned Important Paths}
\label{sec:visualization}

We can extract the important paths from the neural priority function in \method for interpretation. For a given query $(u, q, ?)$ and a predicted entity $v$, we can use the node priority $s_{uq}^{(t)}(x)$ at each step to estimate the importance of a path. Empirically, the importance of a path $s_q(P)$ is estimated by
\begin{equation}
    s_q(P) = \frac{1}{|P|}\sum_{t=1, P^{(t)}=(x, r, y)}^{|P|} \frac{s_{uq}^{(t-1)}(x)}{S_{uq}^{(t-1)}}
\end{equation}
where $S_{uq}^{(t-1)} = \max_{x \in \gV^{(t-1)}} s_{uq}^{(t-1)}(x)$ is a normalizer to normalize the priority score for each step $t$. To extract the important paths with large $s_q(P)$, we perform beam search over the priority function $s_{uq}^{(t-1)}(x)$ of each step. Fig.~\ref{fig:visualization} shows the important paths learned by \method for a test sample in FB15k-237. Given the query \emph{(Bandai, industry, ?)}, we can see both paths \emph{Bandai $\xleftarrow{\text{subsidiary}}$ Bandai Namco $\xrightarrow{\text{industry}}$ video game} and \emph{Bandai $\xrightarrow{\text{industry}}$ media$ \xleftarrow{\text{industry}}$ Pony Canyon $\xrightarrow{\text{industry}}$ video game} are consistent with human cognition. More visualization results can be found in App.~\ref{app:visualization}.

\begin{table}[!t]
    \begin{minipage}[t]{0.62\textwidth}
        \centering
        \caption{Performance on inductive knowledge graph reasoning (MRR). V1-v4 are 4 standard inductive splits. Results of compared methods are taken from \cite{zhang2022knowledge}. $\alpha=50\%$ and $\beta=100\%$ for FB15k237. $\alpha=5\%$ and $\beta=100\%$ for WN18RR. More metrics and efficiency results are in App.~\ref{app:experiment}.}
        \footnotesize
        \begin{adjustbox}{max width=\textwidth}
            \begin{tabular}{lcccccccc}
                \toprule
                \multirow{2}{*}{\bf{Method}} & \multicolumn{4}{c}{\bf{FB15k-237}} & \multicolumn{4}{c}{\bf{WN18RR}} \\
                & v1 & v2 & v3 & v4 & v1 & v2 & v3 & v4 \\
                \midrule
                GraIL & 0.279 & 0.276 & 0.251 & 0.227 & 0.627 & 0.625 & 0.323 & 0.553 \\
                \midrule
                NeuralLP & 0.325 & 0.389 & 0.400 & 0.396 & 0.649 & 0.635 & 0.361 & 0.628 \\
                DRUM & 0.333 & 0.395 & 0.402 & 0.410 & 0.666 & 0.646 & 0.380 & 0.627 \\
                NBFNet & 0.422 & \bf{0.514} & \bf{0.476} & 0.453 & \bf{0.741} & \bf{0.704} & \bf{0.452} & 0.641 \\
                RED-GNN & 0.369 & 0.469 & 0.445 & 0.442 & 0.701 & 0.690 & 0.427 & 0.651 \\
                \method & \bf{0.457} & \bf{0.510} & \bf{0.476} & \bf{0.466} & 0.727 & \bf{0.704} & 0.441 & \bf{0.661} \\
                \bottomrule
            \end{tabular}
        \end{adjustbox}
        \label{tab:inductive}
    \end{minipage}
    \hspace{0.8em}
    \begin{minipage}[t]{0.34\textwidth}
        \centering
        \caption{Ablation studies of \method on transductive FB15k-237.}
        \vspace{-0.3em}
        \begin{subtable}{\textwidth}
            \centering
            \caption{Choices of priority function. \label{tab:handcraft}}
            \vspace{-0.5em}
            \begin{adjustbox}{max width=\textwidth}
                \begin{tabular}{lcccc}
                    \toprule
                    \bf{Priority} & \multicolumn{4}{c}{\bf{FB15k-237}} \\
                    \bf{Function} & MRR & H@1 & H@3 & H@10 \\
                    \midrule
                    PPR & 0.266 & 0.212 & 0.296 & 0.371 \\ 
                    Degree & 0.347 & 0.268 & 0.383 & 0.501  \\
                    Random & 0.378 & 0.288 & 0.413 & 0.556 \\
                    Neural & \bf{0.411} & \bf{0.321} & \bf{0.453} & \bf{0.586} \\ 
                    \bottomrule
                \end{tabular}
            \end{adjustbox}
        \end{subtable}
        \vspace{-0.2em}
        \begin{subtable}{\textwidth}
            \centering
            \caption{W/ or w/o sharing weights. \label{tab:share_weights}}
            \vspace{-0.5em}
            \begin{adjustbox}{max width=\textwidth}
                \begin{tabular}{lcccc}
                    \toprule
                    \bf{Sharing} & \multicolumn{4}{c}{\bf{FB15k-237}} \\
                    \bf{Weights} & MRR & H@1 & H@3 & H@10 \\
                    \midrule
                    No  & 0.374 & 0.282 & 0.413 & 0.557 \\
                    Yes & \bf{0.411} & \bf{0.321} & \bf{0.453} & \bf{0.586} \\ 
                    \bottomrule
                \end{tabular}
            \end{adjustbox}
        \end{subtable}
        \label{tab:ablation}
    \end{minipage}
\end{table}
\begin{figure*}[!t]
    \centering
    \begin{minipage}[b]{0.5\textwidth}
        \vspace{-1.5em}
        \centering
        \hspace{-0.8em}
        \includegraphics[width=0.48\textwidth]{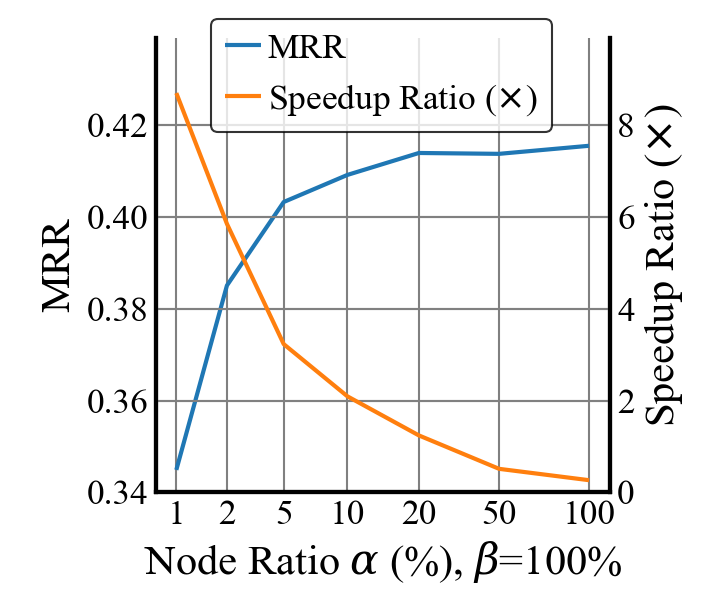}
        \hspace{0.5em}
        \includegraphics[width=0.46\textwidth]{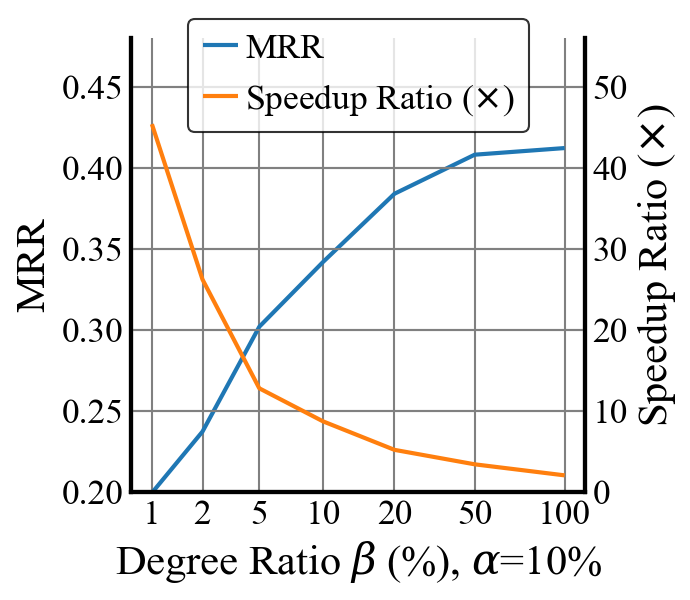}
        \caption{Performance and efficiency trade-off w.r.t.\ node ratio $\alpha$ and degree ratio $\beta$. Speedup ratio is relative to NBFNet.}
        \label{fig:trade_off}
    \end{minipage}
    \hspace{0.3em}
    \begin{minipage}[b]{0.48\textwidth}
        \centering
        \includegraphics[width=0.96\textwidth]{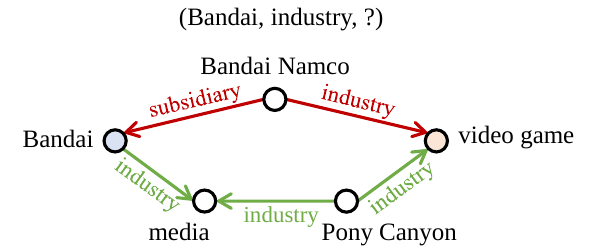}
        \caption{Visualization of important paths learned by the neural priority function in \method.}
        \label{fig:visualization}
    \end{minipage}
    \vspace{-1em}
\end{figure*}

\section{Related Work}

\paragraph{Path-based Reasoning}
\label{sec:related_path_based_reasoning}
Path-based methods use paths between entities for knowledge graph reasoning. Early methods like Path Ranking~\cite{lao2010relational, gardner2015efficient} collect relational paths as symbolic features for classification. Path-RNN~\cite{neelakantan2015compositional, das2017chains} and PathCon~\cite{wang2021relational} improve Path Ranking by learning the representations of paths with recurrent neural networks (RNN). However, these works operate on the full set of paths between two entities, which grows exponentially w.r.t.\ the path length. Typically, these methods can only be applied to paths with at most 3 edges.

To avoid the exhaustive search of paths, many methods learn to sample important paths for reasoning. DeepPath~\cite{xiong2017deeppath} and MINERVA~\cite{das2017go} learn an agent to collect meaningful paths on the knowledge graph through reinforcement learning. These methods are hard to train due to the extremely sparse rewards. Later works improve them by engineering the reward function~\cite{lin2018multi} or the search strategy~\cite{shen2018m}, using multiple agents for positive and negative paths~\cite{hildebrandt2020reasoning} or for coarse- and fine-grained paths~\cite{zhang2022learning}. \cite{chen2018variational} and \cite{qu2020rnnlogic} use a variational formulation to learn a sparse prior for path sampling. Another category of methods utilizes the dynamic programming to search paths in a polynomial time. NeuralLP~\cite{yang2017differentiable} and DRUM~\cite{sadeghian2019drum} use dynamic programming to learn linear combination of logic rules. All-Paths~\cite{toutanova2016compositional} adopts a Floyd-Warshall-like algorithm to learn path representations between all pairs of entities. Recently, NBFNet~\cite{zhu2021neural} and RED-GNN~\cite{zhang2022knowledge} leverage a Bellman-Ford-like algorithm to learn path representations from a single-source entity to all entities. While dynamic programming methods achieve state-of-the-art results among path-based methods, they need to perform message passing on the full knowledge graph. By comparison, our \method learns a priority function and only explores a subset of paths, which is more scalable than existing dynamic programming methods.

\vspace{-0.5em}
\paragraph{Efficient Graph Neural Networks}
Our work is also related to efficient graph neural networks, since both try to improve the scalability of graph neural networks (GNNs). Sampling methods~\cite{hamilton2017inductive, chen2018fastgcn, huang2018adaptive, zeng2019graphsaint} reduce the cost of message passing by computing GNNs with a sampled subset of nodes and edges. Non-parametric GNNs~\cite{klicpera2018predict, wu2019simplifying, frasca2020sign, chen2020scalable} decouple feature propagation from feature transformation, and reduce time complexity by preprocessing feature propagation. However, both sampling methods and non-parametric GNNs are designed for homogeneous graphs, and it is not straightforward to adapt them to knowledge graphs. On knowledge graphs, RS-GCN~\cite{feeney2021relation} learns to sample neighborhood with reinforcement learning. DPMPN~\cite{xu2019dynamically} learns an attention to iteratively select nodes for message passing. SQALER~\cite{atzeni2021sqaler} first predicts important path types based on the query, and then applies GNNs on the subgraph extracted by the predicted paths. Our \method shares the same goal with these methods, but learns a neural priority function to iteratively select important paths.
\section{Discussion and Conclusion}
\label{sec:discussion}
\paragraph{Limitation and Future Work}
One limitation for \method is that we focus on algorithm design rather than system design. As a result, the improvement in time and memory cost is much less than the improvement in the number of messages (Tab.~\ref{tab:transductive_efficiency} and App.~\ref{app:experiment}). In the future, we will co-design the algorithm and the system to further improve the efficiency.

\vspace{-0.5em}
\paragraph{Societal Impact}
This work proposes a scalable model for path-based reasoning. On the positive side, it reduces the training and test time of reasoning models, which helps control carbon emission. On the negative side, reasoning models might be used in malicious activities, such as discovering sensitive relationship in anonymized data, which could be augmented by a more scalable model.

\vspace{-0.5em}
\paragraph{Conclusion}

We propose \method, a scalable path-based method, to solve knowledge graph reasoning by searching for important paths, which is guided by a neural priority function. Experiments on both transductive and inductive knowledge graphs verify the performance and efficiency of \method. Meanwhile, \method is the first path-based method that scales to million-scale knowledge graphs.
\section*{Acknowledgement}

This project is supported by Intel-MILA partnership program, the Natural Sciences and Engineering Research Council (NSERC) Discovery Grant, the Canada CIFAR AI Chair Program, collaboration grants between Microsoft Research and Mila, Samsung Electronics Co., Ltd., Amazon Faculty Research Award, Tencent AI Lab Rhino-Bird Gift Fund and a NRC Collaborative R\&D Project (AI4D-CORE-06). This project was also partially funded by IVADO Fundamental Research Project grant PRF-2019-3583139727. The computation resource of this project is supported by Mila\footnote{\url{https://mila.quebec/}}, Calcul Qu\'ebec\footnote{\url{https://www.calculquebec.ca/}} and the Digital Research Alliance of Canada\footnote{\url{https://alliancecan.ca/}}.

We would like to thank Zuobai Zhang, Jiarui Lu and Minghao Xu for helpful discussions and comments. We also appreciate all anonymous reviewers for their constructive suggestions.

\bibliographystyle{plain}
\bibliography{reference}

\clearpage
\appendix
\onecolumn

\section{Path-based Reasoning with A* Algorithm}
\label{app:method_A*}

Here we prove the correctness of path-based reasoning with A* algorithm.

\subsection{Iterative Path Selection for Computing Important Paths}
First, we prove that $\hat{\gP}_{u \leadsto v|q}^{(t)}$ computed by Eqn.~\ref{eqn:path_boundary} and \ref{eqn:path_selection} equals to the set of important paths and paths that are different from important paths in the last hop.

\begin{theorem}
\label{thm:superset}
If $m_q(\gP): 2^\gP \mapsto 2^\gP$ can select all important paths from a set of paths $\gP$, the set of paths $\hat{\gP}_{u \leadsto v|q}^{(t)}$ computed by Eqn.~\ref{eqn:path_boundary} and \ref{eqn:path_selection} equals to the set of important paths and paths that are different from important paths in the last hop of length $t$.
\begin{align}
    &\hat{\gP}_{u \leadsto v|q}^{(0)} \leftarrow \{(u, \text{self loop}, v)\}~\text{if}~u = v~\text{else}~\varnothing \tag{\ref{eqn:path_boundary}} \\
    &\hat{\gP}_{u \leadsto v|q}^{(t)} \leftarrow \bigcup_{\substack{x \in \gV \\ (x,r,v) \in \gE(v)}} \left\{P + \{(x,r,v)\} \middle| P \in m_q(\hat{\gP}_{u \leadsto x|q}^{(t-1)})\right\} \tag{\ref{eqn:path_selection}}
\end{align}
\end{theorem}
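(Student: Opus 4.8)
The plan is to prove the identity by induction on the iteration index $t$, after first recasting the target set in a cleaner equivalent form. Concretely, I would show
\[
\hat{\gP}_{u \leadsto v|q}^{(t)} = \bigcup_{\substack{x \in \gV \\ (x,r,v) \in \gE(v)}} \left\{ P + \{(x,r,v)\} \;\middle|\; P \in \gP_{u \leadsto x|q}^{(t-1)} \right\},
\]
where $\gP_{u \leadsto x|q}^{(t-1)}$ denotes the set of important paths of length $t-1$ from $u$ to $x$; that is, $\hat{\gP}_{u \leadsto v|q}^{(t)}$ consists exactly of the one-edge extensions of important length-$(t-1)$ paths. I would then note that this union is the set described in the statement. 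This uses the structural property of important paths from Section~\ref{sec:method_A*}: if a path is important, then so is every prefix of it. Hence every important path of length $t$ ending at $v$ factors as an important length-$(t-1)$ path followed by one edge into $v$, so the union contains $\gP_{u \leadsto v|q}^{(t)}$; the remaining elements of the union are length-$t$ paths whose $(t-1)$-hop prefix is important but which are not themselves important, i.e.\ paths differing from important paths only in their last hop.

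For the induction, the base case $t = 0$ is immediate from Eqn.~\ref{eqn:path_boundary}: $\hat{\gP}_{u \leadsto v|q}^{(0)}$ is $\{(u, \text{self loop}, v)\}$ if $u = v$ and $\varnothing$ otherwise, which is precisely the (trivially important) set of length-$0$ paths. For the inductive step, assume the reformulated identity holds at $t-1$. The key sublemma is that $m_q\!\left(\hat{\gP}_{u \leadsto x|q}^{(t-1)}\right) = \gP_{u \leadsto x|q}^{(t-1)}$: by the induction hypothesis $\hat{\gP}_{u \leadsto x|q}^{(t-1)}$ contains every important length-$(t-1)$ path ending at $x$, so $m_q$ — which selects all important paths from its argument — keeps all of them; and it removes the remaining paths of $\hat{\gP}_{u \leadsto x|q}^{(t-1)}$, since by construction those are not important (any important path among them would already belong to $\gP_{u \leadsto x|q}^{(t-1)}$). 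Substituting $m_q(\hat{\gP}_{u \leadsto x|q}^{(t-1)}) = \gP_{u \leadsto x|q}^{(t-1)}$ into Eqn.~\ref{eqn:path_selection} yields exactly the reformulated union at level $t$; combining with the first paragraph closes the induction.

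The main obstacle is the $m_q$ sublemma in the inductive step. One has to argue carefully that the ``extra'' paths carried along in $\hat{\gP}^{(t-1)}$ — the ones differing from important paths only in the last hop — are precisely what $m_q$ filters away, and that, conversely, no important path is ever lost over the $t-1$ iterations. This is where both hypotheses are genuinely used: reading ``$m_q$ selects all important paths'' as ``$m_q$ returns exactly the important subset of its argument,'' and the prefix-closure property so that the important length-$(t-1)$ paths are all still present in $\hat{\gP}^{(t-1)}$. Everything else reduces to routine manipulation of unions and path concatenation.
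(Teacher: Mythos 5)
Your proof is correct and takes essentially the same route as the paper's: an induction on $t$ that relies on prefix-closure of important paths and on reading ``$m_q$ selects all important paths'' as ``$m_q$ returns exactly the important subset of its input.'' The only difference is organizational — you isolate $m_q\bigl(\hat{\gP}^{(t-1)}_{u \leadsto x|q}\bigr) = \gP^{(t-1)}_{u \leadsto x|q}$ as an explicit sublemma and phrase the target set as one-edge extensions of important $(t-1)$-paths, whereas the paper names the target set $\gQ^{(t)}_{u\leadsto v|q}$ and proves the equality by a two-inclusion argument using the same ingredients.
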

\begin{proof}
We use $\gQ_{u \leadsto v|q}^{(t)}$ to denote the set of important paths and paths that are different from important paths in the last hop of length $t$. For paths of length $0$, we define them to be important as they should be the prefix of some important paths. Therefore, $\gQ_{u \leadsto v|q}^{(0)} = \{(u, \text{self loop}, v)\}$ if $u = v$ else $\varnothing$. We use $P_{:-1}$ to denote the prefix of path $P$ without the last hop. The goal is to prove $\hat{\gP}_{u \leadsto v|q}^{(t)} = \gQ_{u \leadsto v|q}^{(t)}$. 

First, we prove $\hat{\gP}_{u \leadsto v|q}^{(t)} \subseteq \gQ_{u \leadsto v|q}^{(t)}$. It is obvious that $\hat{\gP}_{u \leadsto v|q}^{(0)} \subseteq \gQ_{u \leadsto v|q}^{(0)}$. In the case of $t > 0$, $\forall P \in \hat{\gP}_{u \leadsto v|q}^{(t)}$, we have $P_{:-1} \in m_q(\hat{\gP}_{u \leadsto v|q}^{(t-1)})$ according to Eqn.~\ref{eqn:path_selection}. Therefore, $P \in \gQ_{u \leadsto v|q}^{(t)}$.

Second, we prove $\gQ_{u \leadsto v|q}^{(t)} \subseteq \hat{\gP}_{u \leadsto v|q}^{(t)}$ by induction. For the base case $t = 0$, it is obvious that $\gQ_{u \leadsto v|q}^{(0)} \subseteq \hat{\gP}_{u \leadsto v|q}^{(0)}$. For the inductive case $t > 0$, $\forall Q \in \gQ_{u \leadsto v|q}^{(t)}$, $Q_{:-1}$ is an important path of length $t-1$ according to the definition of $\gQ_{u \leadsto v|q}^{(t)}$. $Q_{:-1} \in m_q(\gQ_{u \leadsto v|q}^{(t-1)}) \subseteq \gQ_{u \leadsto v|q}^{(t-1)}$ according to the definition of $m_q(\cdot)$ and $\gQ_{u \leadsto v|q}^{(t-1)}$. Based on the inductive assumption, we get $Q_{:-1} \in \hat{\gP}_{u \leadsto v|q}^{(t-1)}$. Therefore, $Q \in \hat{\gP}_{u \leadsto v|q}^{(t)}$ according to Eqn.~\ref{eqn:path_selection}.
\end{proof}

As a corollary of Thm.~\ref{thm:superset}, $\hat{\gP}_{u \leadsto v|q}$ is a slightly larger superset of the important paths $P_{u \leadsto v|q}$.

\begin{corollary}
If the end nodes of important paths are uniformly distributed in the knowledge graph, the expected size of $\hat{\gP}_{u \leadsto v|q}^{(t)}$ is $\left|\gP_{u \leadsto v|q}^{(t)}\right| + \frac{|\gE|}{|\gV|}\left|\gP_{u \leadsto v|q}^{(t-1)}\right|$.
\end{corollary}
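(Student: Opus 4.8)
The plan is to upgrade the set identity of Theorem~\ref{thm:superset} to a counting identity and then take an expectation over the random placement of the endpoints of important paths. Unfolding Eqn.~\ref{eqn:path_selection} with $m_q$ acting as in Theorem~\ref{thm:superset} (so that $m_q(\hat{\gP}_{u\leadsto x|q}^{(t-1)}) = \gP_{u\leadsto x|q}^{(t-1)}$), one gets that $\hat{\gP}_{u \leadsto v|q}^{(t)}$ is \emph{exactly} the set of length-$t$ paths ending at $v$ whose length-$(t-1)$ prefix is an important path. I would split this set according to whether the path itself is important: the important ones are precisely $\gP_{u \leadsto v|q}^{(t)}$ (by prefix-closure every important length-$t$ path has an important length-$(t-1)$ prefix, hence lies in $\hat{\gP}_{u \leadsto v|q}^{(t)}$), and the remainder is the family of non-important length-$t$ paths with important prefix. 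These two families are disjoint and together exhaust $\hat{\gP}_{u \leadsto v|q}^{(t)}$; the first contributes exactly $\big|\gP_{u \leadsto v|q}^{(t)}\big|$.

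For the second family, each element is uniquely of the form $P + \{(x,r,v)\}$ with $P \in \gP_{u \leadsto x|q}^{(t-1)}$ and $(x,r,v) \in \gE(v)$, and distinct pairs give distinct paths, so its size is at most $\sum_{(x,r,v)\in\gE(v)} \big|\gP_{u \leadsto x|q}^{(t-1)}\big|$, with the gap being exactly the extensions that happen to be important — a vanishing fraction of all length-$t$ extensions, which is the very premise motivating the notion of important paths. I would then invoke the hypothesis: ``the endpoints of important paths are uniformly distributed'' I read as the statement that $\E\big[\big|\gP_{u \leadsto x|q}^{(t-1)}\big|\big]$ is independent of the node $x$, hence equals $\E\big[\big|\gP_{u \leadsto v|q}^{(t-1)}\big|\big]$. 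Combining this with linearity of expectation and the idealization that $v$ has the average degree $|\gE|/|\gV|$ gives the second family an expected size of $\tfrac{|\gE|}{|\gV|}\big|\gP_{u \leadsto v|q}^{(t-1)}\big|$; adding the first family's contribution $\big|\gP_{u \leadsto v|q}^{(t)}\big|$ yields the claimed estimate. Equivalently, aggregating over all targets $v$ removes the average-degree idealization, since then $\sum_v \big|\hat{\gP}_{u \leadsto v|q}^{(t)}\big| = \sum_{x\in\gV} d(x)\,\big|\gP_{u \leadsto x|q}^{(t-1)}\big|$, whose expectation is $\tfrac{|\gE|}{|\gV|}$ times the total number of length-$(t-1)$ important paths, using the handshake identity $\sum_{x\in\gV} d(x) = |\gE|$.

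The main obstacle is bookkeeping rather than estimation. Since the two families above are genuinely disjoint and exhaust $\hat{\gP}_{u \leadsto v|q}^{(t)}$, a completely literal count gives $\big|\hat{\gP}_{u \leadsto v|q}^{(t)}\big| = \big|\gP_{u \leadsto v|q}^{(t)}\big| + \big(\text{(all extensions)} - \big|\gP_{u \leadsto v|q}^{(t)}\big|\big) = \text{(all extensions)}$, so the $\big|\gP^{(t)}\big|$ term formally cancels; displaying it separately in the final formula amounts to treating $\big|\gP^{(t)}\big|$ as negligible inside ``(all extensions)'' while still keeping it as the ``retained'' term — the same ``important paths are rare'' idealization used throughout, and consistent with the surrounding remark that $\hat{\gP}$ is only a \emph{slightly} larger superset. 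Beyond that, the only care needed is to pin down a precise probabilistic model for ``uniformly distributed'' so that the single application of linearity of expectation is rigorous, and to decide whether the statement is read per-target-$v$ (needing the average-degree idealization of $v$) or aggregated over all $v$ (where no such idealization is required).
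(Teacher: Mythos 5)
Your proposal is correct and takes essentially the same route as the paper: decompose $\hat{\gP}^{(t)}_{u\leadsto v|q}$ into important length-$t$ paths plus one-hop extensions of important length-$(t-1)$ paths, then replace each end-node degree by the average $|\gE|/|\gV|$ under the uniform-endpoint hypothesis. You are in fact a shade more careful than the paper's own proof, which simply asserts that each important length-$(t-1)$ path ``generates $d$ paths of the second type'' (silently over-counting the extensions that are themselves important) and leaves the aggregate-over-targets reading implicit, both of which you surface as explicit idealizations.
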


\begin{proof}
Thm.~\ref{thm:superset} indicates that $\hat{\gP}_{u \leadsto v|q}^{(t)}$ contains two types of paths: important paths and paths that are different from important paths in the last hop of length $t$. The number of the first type is $\left|\gP_{u \leadsto v|q}^{(t)}\right|$. Each of the second type corresponds to an important path of length $t-1$. From an inverse perspective, each important path of length $t-1$ generates $d$ paths of the second type for $\hat{\gP}_{u \leadsto v|q}^{(t)}$, where $d$ is the degree of the end node in the path. If the end nodes are uniformly distributed in the knowledge graph, we have $\E\left[\hat{\gP}_{u \leadsto v|q}^{(t)}\right] = \left|\gP_{u \leadsto v|q}^{(t)}\right| + \frac{|\gE|}{|\gV|}\left|\gP_{u \leadsto v|q}^{(t-1)}\right|$. For real-world knowledge graphs, $\frac{|\gE|}{|\gV|}|$ is usually a small constant (e.g., $\leq 50$), and $\left|\hat{\gP}_{u \leadsto v|q}^{(t)}\right|$ is slightly larger than $\left|\gP_{u \leadsto v|q}^{(t)}\right|$ in terms of complexity.
\end{proof}

\subsection{From Iterative Path Selection to Iterative Node Selection}
Second, we demonstrate that Eqn.~\ref{eqn:path_selection} can be solved by Eqn.~\ref{eqn:node_selection} if paths with the same length and the same stop node can be merged.

\begin{proposition}
\label{prop:node_selection}
If $m_q(\gP)$ selects paths only based on the length $t$, the start node $u$ and the end node $x$ of each path, by replacing $m_q(\gP)$ with $n_{uq}^{(t)}(\gV)$, $\hat{\gP}_{u \leadsto v|q}^{(t)}$ can be computed as follows
\begin{align}
    \hat{\gP}_{u \leadsto v|q}^{(t)} \leftarrow \bigcup_{\substack{x \in n_{uq}^{(t-1)}(\gV) \\ (x,r,v) \in \gE(v)}} \left\{P + \{(x,r,v)\} \middle| P \in \hat{\gP}_{u \leadsto x|q}^{(t-1)}\right\} \tag{\ref{eqn:node_selection}}
\end{align}
\end{proposition}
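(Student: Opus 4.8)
The plan is to exploit the hypothesis that $m_q$ decides membership using only the triple (length, start node, end node) of a path, which forces $m_q$ to act on each candidate set in an all-or-nothing fashion, and then to read the node selection function $n_{uq}^{(t)}$ off this dichotomy. First I would record a structural fact: by induction on $t$, every path in $\hat{\gP}_{u \leadsto x|q}^{(t-1)}$ has length exactly $t-1$, starts at $u$, and ends at $x$. The base case is Eqn.~\ref{eqn:path_boundary} (a single length-$0$ path, or none), and the inductive step is Eqn.~\ref{eqn:path_selection}, which appends exactly one edge $(x,r,v)$ to each selected path and records $v$ as the new endpoint. Hence all paths in $\hat{\gP}_{u \leadsto x|q}^{(t-1)}$ carry the same (length, start, end) triple, so $m_q$ returns the same verdict on every one of them; that is, either $m_q(\hat{\gP}_{u \leadsto x|q}^{(t-1)}) = \hat{\gP}_{u \leadsto x|q}^{(t-1)}$ or $m_q(\hat{\gP}_{u \leadsto x|q}^{(t-1)}) = \varnothing$.

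Next I would define the node selection function by $n_{uq}^{(t)}(\gS) := \{\, x \in \gS : m_q(\hat{\gP}_{u \leadsto x|q}^{(t)}) \neq \varnothing \,\}$ for $\gS \subseteq \gV$. This is well-formed by recursion on $t$: computing $n_{uq}^{(t)}$ only requires the sets $\hat{\gP}_{u \leadsto x|q}^{(t)}$, which in turn depend only on $\hat{\gP}^{(t-1)}$ and $n_{uq}^{(t-1)}$ through Eqn.~\ref{eqn:node_selection}, so there is no circularity. With this definition the dichotomy above becomes $m_q(\hat{\gP}_{u \leadsto x|q}^{(t-1)}) = \hat{\gP}_{u \leadsto x|q}^{(t-1)}$ when $x \in n_{uq}^{(t-1)}(\gV)$ and $m_q(\hat{\gP}_{u \leadsto x|q}^{(t-1)}) = \varnothing$ otherwise.

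Finally I would substitute this into Eqn.~\ref{eqn:path_selection}. For each $x \notin n_{uq}^{(t-1)}(\gV)$ the inner set $\{P + \{(x,r,v)\} \mid P \in m_q(\hat{\gP}_{u \leadsto x|q}^{(t-1)})\}$ equals $\{P + \{(x,r,v)\} \mid P \in \varnothing\} = \varnothing$ and so contributes nothing to the union; for each $x \in n_{uq}^{(t-1)}(\gV)$ it equals $\{P + \{(x,r,v)\} \mid P \in \hat{\gP}_{u \leadsto x|q}^{(t-1)}\}$. Restricting the index of the union to $x \in n_{uq}^{(t-1)}(\gV)$ then reproduces Eqn.~\ref{eqn:node_selection} exactly, which proves the claim (the equivalence of the two recursions follows by induction on $t$, the base case being immediate since $\hat{\gP}_{u \leadsto x|q}^{(0)}$ is a singleton or empty). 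I do not expect a real obstacle; the only step meriting care is the homogeneity observation in the first paragraph, since it is precisely what lets a rule depending only on length, $u$, and $x$ collapse to a per-node Boolean, and hence to the function $n_{uq}^{(t)}$.
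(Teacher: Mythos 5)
Your proof is correct. The paper itself offers no argument here—it simply states ``This proposition is obvious'' and moves on—so there is no alternative derivation to compare against. Your elaboration is precisely the reasoning being gestured at: the homogeneity fact that every path in $\hat{\gP}_{u \leadsto x|q}^{(t-1)}$ shares the identical triple $(t-1, u, x)$ (which you correctly establish by induction from Eqn.~\ref{eqn:path_boundary} and \ref{eqn:path_selection}) forces a selection rule depending only on those three data to act all-or-nothing on each such set, and hence to collapse to a per-node Boolean $n_{uq}^{(t-1)} : 2^{\gV} \to 2^{\gV}$. The non-circularity check and the final induction establishing that the two recursions agree are small but worth having on the record, and you handle both. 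Nothing is missing.
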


This proposition is obvious. As a result of Prop.~\ref{prop:node_selection}, we merge paths by their length and stop nodes, which turns the exponential tree search to a polynomial dynamic programming algorithm.

\subsection{Reasoning with A* Algorithm}
Finally, we prove that the A* iteration (Eqn.~\ref{eqn:A*}) covers all important paths for reasoning (Eqn.~\ref{eqn:important_path}).

\begin{theorem}
\label{thm:A*}
If $n_{uq}^{(t)}(\gV): 2^\gV \mapsto 2^\gV$ can determine whether paths from $u$ to $x$ are important or not, and $\langle\oplus, \otimes\rangle$ forms a semiring~\cite{hebisch1998semirings}, the representation $\vh_q(u, v)$ for path-based reasoning can be computed by
\begin{align}
    \vh_q^{(t)}(u, v) \leftarrow \vh_q^{(0)}(u, v) \oplus \bigoplus_{\substack{x \in n_{uq}^{(t-1)}(\gV) \\ (x, r, v)\in \gE(v)}} \vh_q^{(t-1)}(u, x) \otimes \vw_q(x, r, v) \tag{\ref{eqn:A*}}
\end{align}
\end{theorem}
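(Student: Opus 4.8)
The plan is to prove Thm~\ref{thm:A*} in three stages: first show that the A* iteration exactly accumulates a $\bigoplus$ of path representations over the path set that Prop~\ref{prop:node_selection} describes; then use Thm~\ref{thm:superset} to identify that set as the important paths together with a controlled collection of non-important ``near misses''; and finally invoke the defining approximation of important paths (Eqn~\ref{eqn:important_path}) to discard the near misses. The semiring hypothesis is used only in the first stage, and the assumption that $n_{uq}^{(t)}$ correctly recognises endpoints of important paths is used only in the second.

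For stage one I would run an induction on the iteration index $t$, mirroring the classical correctness proof of Bellman--Ford over a path algebra. Let $\hat{\gP}_{u \leadsto v|q}^{(\le t)}$ be the set of paths that the recursion of Eqn~\ref{eqn:node_selection} with boundary Eqn~\ref{eqn:path_boundary} has produced by step $t$: the trivial self-loop path when $u=v$, together with every $P + \{(x,r,v)\}$ with $P \in \hat{\gP}_{u \leadsto x|q}^{(\le t-1)}$ and $x \in n_{uq}^{(t-1)}(\gV)$, $(x,r,v) \in \gE(v)$. The claim is $\vh_q^{(t)}(u,v) = \bigoplus_{P \in \hat{\gP}_{u \leadsto v|q}^{(\le t)}} \bigotimes_{(x,r,y)\in P}\vw_q(x,r,y)$. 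The base case is the boundary condition, which by convention is the representation of the trivial path. For the inductive step I would substitute the inductive hypothesis for $\vh_q^{(t-1)}(u,x)$ into Eqn~\ref{eqn:A*}, push the right-multiplication by $\vw_q(x,r,v)$ through the inner $\bigoplus$ using right-distributivity of $\otimes$ over $\oplus$, use associativity of $\otimes$ to recognise the result as the weight of the extended path $P + \{(x,r,v)\}$, and finally use commutativity and associativity of $\oplus$ to merge the contributions across all $x \in n_{uq}^{(t-1)}(\gV)$ and all incident edges into a single $\bigoplus$ over $\hat{\gP}_{u \leadsto v|q}^{(t)}$, with the re-injected $\vh_q^{(0)}(u,v)$ term supplying the trivial path. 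This is precisely where the semiring axioms are each consumed (an additive identity that annihilates under $\otimes$, a multiplicative identity, two-sided distributivity, and associativity of both operations plus commutativity of $\oplus$).

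Stages two and three are then short. By Prop~\ref{prop:node_selection}, Eqn~\ref{eqn:node_selection} realises Eqn~\ref{eqn:path_selection} at the node level, so Thm~\ref{thm:superset} gives that $\hat{\gP}_{u \leadsto v|q}^{(\le t)}$ is the union of the important paths of length $\le t$ with paths that agree with some important path except in the last hop; the latter are, by construction, not important. Under the hypothesis on $n_{uq}^{(t)}$, a straightforward induction shows the former part is exactly $\gP_{u \leadsto v|q}^{(\le t)}$. Splitting the $\bigoplus$ from stage one along this partition and applying Eqn~\ref{eqn:important_path} to the non-important summands (each has negligible contribution to $\vh_q(u,v)$) yields $\vh_q^{(t)}(u,v) \approx \bigoplus_{P \in \gP_{u \leadsto v|q}} \vh_q(P) = \vh_q(u,v)$ once $t$ is large enough to cover the relevant (short) important paths, which is the assertion.

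The main obstacle I anticipate is the bookkeeping in stage one: the boundary term $\vh_q^{(0)}(u,v)$ is re-injected every iteration while the propagating node set $n_{uq}^{(t-1)}(\gV)$ varies with $t$, so one must check that paths of length strictly less than $t$ — in particular short important paths — are still represented in $\vh_q^{(t)}(u,v)$ rather than being silently dropped when their endpoints leave the selected set. I would address this by making $\hat{\gP}_{u \leadsto v|q}^{(\le t)}$ the primary object of the induction, defined so that it literally tracks what the recursion retains, and then separately verifying that it still contains every important path of length $\le t$, using that an important path has important prefixes whose endpoints are, by the hypothesis on $n_{uq}$, selected at exactly the steps at which they are needed. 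With that invariant pinned down, the remaining manipulations are routine semiring algebra.
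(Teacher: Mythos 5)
Your three-stage plan is essentially the paper's own argument. Stage one is exactly Lemma~\ref{lem:A*} in the appendix: the inductive claim that
$\vh_q^{(t)}(u,v) = \bigoplus_{P \in \hat{\gP}_{u \leadsto v|q}^{(\leq t)}} \bigotimes_{i=1}^{|P|}\vw_q(e_i)$, proved by substituting the inductive hypothesis into Eqn.~\ref{eqn:A*}, pushing $\otimes\,\vw_q(x,r,v)$ through the inner $\bigoplus$ by distributivity, and merging the resulting path sets by associativity and commutativity of $\oplus$. Stages two and three match the paper's conclusion via Prop.~\ref{prop:node_selection}, Thm.~\ref{thm:superset}, and Eqn.~\ref{eqn:important_path}. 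The semiring axioms you enumerate are the ones the paper uses as well.

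The bookkeeping concern you flag is real and is in fact a point the paper's proof glosses over. The paper defines $\hat{\gP}_{u \leadsto v|q}^{(\leq t)} = \bigcup_{k=0}^{t}\hat{\gP}_{u \leadsto v|q}^{(k)}$, where each layer $\hat{\gP}_{u \leadsto v|q}^{(k)}$ is built with the step-$k$ selector $n_{uq}^{(k-1)}(\gV)$. But in the inductive step of Lemma~\ref{lem:A*}, when $\vh_q^{(t-1)}(u,x) = \bigoplus_{P \in \hat{\gP}_{u \leadsto x|q}^{(\leq t-1)}} \cdots$ is substituted and each $P$ is extended by an edge $(x,r,v)$ with $x \in n_{uq}^{(t-1)}(\gV)$, a length-$k$ prefix $P$ with $k < t-1$ produces a path of length $k+1 < t$ that belongs to $\hat{\gP}_{u \leadsto v|q}^{(k+1)}$ only if $x$ also lies in $n_{uq}^{(k)}(\gV)$; conversely, paths of $\hat{\gP}_{u \leadsto v|q}^{(k+1)}$ whose last internal node is selected by $n_{uq}^{(k)}$ but not by $n_{uq}^{(t-1)}$ would be dropped. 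The paper's transition to Eqn.~\ref{eqn:associative}, which reindexes the sum over $\hat{\gP}_{u \leadsto v|q}^{(\leq t)}\setminus\hat{\gP}_{u \leadsto v|q}^{(0)}$ by appealing to Prop.~\ref{prop:node_selection}, silently assumes this consistency across steps. Your proposed fix — take $\hat{\gP}_{u \leadsto v|q}^{(\leq t)}$ to be, by definition, the set the A* recursion retains, and then separately argue (using the hypothesis on $n_{uq}^{(t)}$ and the fact that important paths have important prefixes) that every important path of length $\leq t$ is still in it — is the right way to close this, and is somewhat more careful than the paper's written proof. No gap in your plan; if anything, you've caught a loose end.
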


\begin{proof}
In order to prove Thm.~\ref{thm:A*}, we first prove a lemma for the analytic form of $\vh_q^{(t)}(u, v)$, and then show that $\lim_{t \rightarrow \infty}{\vh_q^{(t)}(u, v)}$ converges to the goal of path-based reasoning.

\begin{lemma}
\label{lem:A*}
Under the same condition as Thm.~\ref{thm:A*}, the intermediate representation $\vh_q^{(t)}(u, v)$ computed by Eqn.~\ref{eqn:boundary} and \ref{eqn:A*} aggregates all important paths within a length of $t$ edges, i.e.
\begin{equation}
    \vh_q^{(t)}(u, v) = \bigoplus_{P \in \hat{\gP}_{u \leadsto v|q}^{(\leq t)}}\bigotimes_{i=1}^{|P|}\vw_q(e_i)
\end{equation}
where $\hat{\gP}_{u \leadsto v|q}^{(\leq t)} = \bigcup_{k = 0}^{t} \hat{\gP}_{u \leadsto v|q}^{(k)}$.
\end{lemma}

\begin{proof}
We prove Lem.~\ref{lem:A*} by induction. Let $\ozero_q$ and $\oone_q$ denote the identity elements of $\oplus$ and $\otimes$ respectively. We have $\mathbbm{1}_q(u = v) = \oone_q$ if $u = v$ else $\ozero_q$. Note paths of length $0$ only contain self loops, and we define them as important paths, since they should be prefix of some important paths.

For the base case $t = 0$, we have $\vh_q^{(0)}(u, u) = \oone_q = \bigoplus_{P \in \gP_{u \leadsto u|q}: |P| \leq 0}\bigotimes_{i=1}^{|P|}\vw_q(e_i)$ since the only path from $u$ to $u$ is the self loop, which has the representation $\oone_q$. For $u \neq v$, we have $\vh_q^{(0)}(u, v) = \ozero_q = \bigoplus_{P \in \gP_{u \leadsto v|q}: |P| \leq 0}\bigotimes_{i=1}^{|P|}\vw_q(e_i)$ since there is no important path from $u$ to $v$ within length $0$.

For the inductive case $t > 0$, we have
\begingroup
\allowdisplaybreaks
\begin{align}
    \vh^{(t)}_q(u,v) 
    &= \vh^{(0)}_q(u,v) \oplus \bigoplus_{\substack{x \in n_{uq}^{(t-1)}(\gV) \\ (x, r, v)\in \gE(v)}}\vh_q^{(t-1)}(u,x) \otimes \vw_q(x, r, v)\\
    &= \vh^{(0)}_q(u,v) \oplus \bigoplus_{\substack{x \in n_{uq}^{(t-1)}(\gV) \\ (x, r, v)\in \gE(v)}}\left(\bigoplus_{P\in\hat{\gP}_{u \leadsto v|q}^{(\leq t-1)}} \bigotimes_{i=1}^{|P|} \vw_q(e_i)\right) \otimes \vw_q(x, r, v) \label{eqn:induction}\\
    &=
    \vh^{(0)}_q(u,v) \oplus \bigoplus_{\substack{x \in n_{uq}^{(t-1)}(\gV) \\ (x, r, v)\in \gE(v)}}\left[\bigoplus_{P\in\hat{\gP}_{u \leadsto v|q}^{(\leq t-1)}}\left(\bigotimes_{i=1}^{|P|} \vw_q(e_i)\right) \otimes \vw_q(x, r, v)\right] \label{eqn:distributive}\\
    &= \left( \bigoplus_{P\in\hat{\gP}_{u \leadsto v|q}^{(0)}} \bigotimes_{i=1}^{|P|} \vw_q(e_i)\right) \oplus \left(\bigoplus_{P\in\hat{\gP}_{u \leadsto v|q}^{(\leq t)}\setminus\hat{\gP}_{u \leadsto v|q}^{(0)}} \bigotimes_{i=1}^{|P|} \vw_q(e_i)\right) \label{eqn:associative}\\
    &= \bigoplus_{P\in\hat{\gP}_{u \leadsto v|q}^{(\leq t)}} \bigotimes_{i=1}^{|P|} \vw_q(e_i),
\end{align}
\endgroup
where Eqn.~\ref{eqn:induction} uses the inductive assumption, Eqn.~\ref{eqn:distributive} relies on the distributive property of $\otimes$ over $\oplus$, and Eqn.~\ref{eqn:associative} uses Prop.~\ref{prop:node_selection}. In the above equations, $\bigotimes$ and $\otimes$ are always applied before $\bigoplus$ and $\oplus$.
\end{proof}

Since $\gP_{u \leadsto v|q}^{(t)} \subseteq \hat{\gP}_{u \leadsto v|q}^{(t)}$, we have $\gP_{u \leadsto v|q} \subseteq \hat{\gP}_{u \leadsto v|q} \subseteq \gP_{u \leadsto v}$. Based on Lem.~\ref{lem:A*} and Eqn.~\ref{eqn:important_path}, it is obvious to see that
\begin{equation}
    \lim_{t \rightarrow \infty}{\vh_q^{(t)}(u, v)} = \bigoplus_{P \in \hat{\gP}_{u \leadsto v|q}}\vh_q(P) \approx \bigoplus_{P \in \gP_{u \leadsto v}}\vh_q(P) = \vh_q(u, v)
\end{equation}
Therefore, Thm.~\ref{thm:A*} holds.
\end{proof}

\section{Additional Edge Selection Step in \method}
\label{app:edge_selection}

As demonstrated in Sec.~\ref{sec:method_neural}, \method selects top-$K$ nodes according to the current priority function, and computes the A* iteration
\begin{equation}
    \vh_q^{(t)}(u, v) \leftarrow \vh_q^{(0)}(u, v) \oplus \bigoplus_{\substack{x \in \gX^{(t)} \\ (x, r, v)\in \gE(v)}} s_{uq}^{(t-1)}(x)\left(\vh_q^{(t-1)}(u, x) \otimes \vw_q(x, r, v)\right) \tag{\ref{eqn:weight}}
\end{equation}
However, even if we choose a small $K$, Eqn.~\ref{eqn:weight} may still propagate the messages to many nodes in the knowledge graph, resulting in a high computation cost. This is because some nodes in the knowledge graph may have very large degrees, e.g., the entity \emph{Human} is connected to every person in the knowledge graph. In fact, it is not necessary to propagate the messages to every neighbor of a node, especially if the node has a large degree. Based on this observation, we propose to further select top-$L$ edges from the neighborhood of $\gX^{(t)}$ to create $\gE^{(t)}$
\begin{equation}
    \gE^{(t)} \leftarrow \textrm{TopL}(s_{uq}^{(t-1)}(v) | x \in \gX^{(t)}, (x, r, v) \in \gE(x))
    \label{eqn:edge_selection}
\end{equation}
where each edge is picked according to the priority of node $v$, i.e., the tail node of an edge. By doing so, we reuse the neural priority function and avoid introducing any additional priority function. The intuition of Eqn.~\ref{eqn:edge_selection} is that if an edge $(x, r, v)$ goes to a node with a higher priority, it is likely we are propagating towards the answer entities. With the selected edges $\gE^{(t)}$, the A* iteration becomes
\begin{equation}
    \vh_q^{(t)}(u, v) \leftarrow \vh_q^{(0)}(u, v) \oplus \bigoplus_{\substack{x \in \gX^{(t)} \\ (x, r, v)\in \gE^{(t)}(v)}} s_{uq}^{(t-1)}(x)\left(\vh_q^{(t-1)}(u, x) \otimes \vw_q(x, r, v)\right)
\end{equation}
which is also the implementation in Alg.~\ref{alg:A*Net}.

\section{Padding-Free Operations}
\label{app:padding_free}

In \method, different training samples may have very different sizes for the selected nodes $\gV^{(t)}$ and $\gE^{(t)}$. To avoid the additional computation over padding in conventional batched execution, we introduce padding-free operations, which operates on the concatenation of samples without any padding.

Specifically, padding-free operations construct IDs for each sample in the batch, such that we can distinguish different samples when we apply operations to the whole batch. As showed in Fig.~\ref{fig:padding_free}, for padding-free \emph{topk}, we pair the inputs with their sample IDs, and cast the problem as a multi-key sort over the whole batch. The multi-key sort is implemented by two calls to standard stable sort operations sequentially. We then apply indexing operations and remove the sample IDs to get the desired output. Alg.~\ref{alg:topk} provides the pseudo code for padding-free \emph{topk} in PyTorch.

\begin{figure*}[!h]
    \centering
    \includegraphics[width=0.98\textwidth]{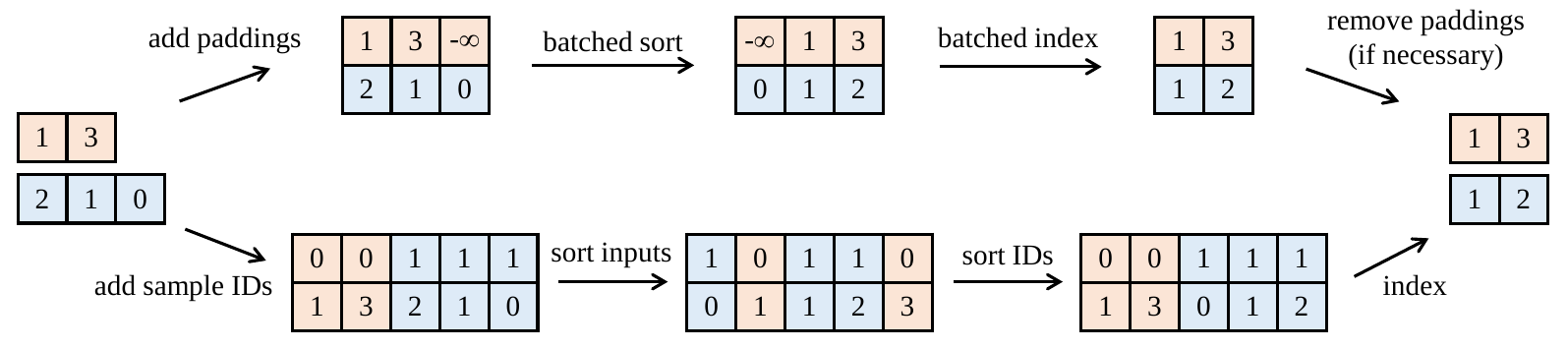}
    \caption{Comparison between padding-based \emph{topk} (up) and padding-free \emph{topk} (down) for $K=2$. Padding-based operations first add paddings to create a padded tensor for batched operations, and then remove the paddings. Padding-free operations pair the inputs with their sample IDs (showed in colors), and then apply single-sample operations over the whole batch.}
    \label{fig:padding_free}
\end{figure*}

\begin{algorithm}[H]
    \footnotesize
    \captionsetup{font=footnotesize}\caption{Padding-free implementation of \emph{topk} in PyTorch}
    \textbf{Input:} Input values of each sample \mintinline{python}{inputs}, size of each sample \mintinline{python}{sizes}, \mintinline{python}{K} \\
    \textbf{Output:} TopK values of each sample, indices of topk values
\begin{minted}[baselinestretch=1,fontsize=\footnotesize,linenos,numbersep=-6pt]{python}
   # the sample id of each element
   sample_ids = torch.arange(batch_size).repeat_interleave(sizes)
   # multi-key sort of (sample_ids, inputs)
   indices = inputs.argsort()
   indices = sample_ids[indices].argsort(stable=True)
   sorteds = inputs[indices]
   # take top-k values of each sample
   ranges = torch.arange(K).repeat(batch_size)
   ranges = ranges + sizes.cumsum(0).repeat_interleave(K) - K
   return sorteds[ranges], indices[ranges]
\end{minted}
    \label{alg:topk}
    \vspace{-0.3em}
\end{algorithm}

\section{Datasets \& Evaluation}
\label{app:dataset}

Dataset statistics for transductive and inductive knowledge graph reasoning is summarized in Tab.~\ref{tab:transductive_statistics} and \ref{tab:inductive_statistics} respectively. For the transductive setting, given a query head (or tail) and a query relation, we rank each answer tail (or head) entity against all negative entities. For the inductive setting, we follow \cite{zhang2022knowledge} and rank each each answer tail (or head) entity against all negative entities, rather than 50 randomly sampled negative entities in \cite{teru2020inductive}. We report the mean reciprocal rank (MRR) and HITS at K (H@K) of the rankings.

\begin{table}[!h]
    \centering
    \caption{Dataset statistics for transductive knowledge graph reasoning.}
    \footnotesize
    \begin{tabular}{lccccc}
        \toprule
        \multirow{2}{*}{\bf{Dataset}} & \multirow{2}{*}{\bf{\#Relation}} & \multirow{2}{*}{\bf{\#Entity}} & \multicolumn{3}{c}{\bf{\#Triplet}} \\
        & & & \#Train & \#Valid & \#Test \\
        \midrule
        FB15k-237 & 237 & 14,541 & 272,115 & 17,535 & 20,466 \\
        WN18RR & 11 & 40,943 & 86,835 & 3,034 & 3,134 \\
        YAGO3-10 & 37 & 123,182 & 1,079,040 & 5000 & 5000 \\
        ogbl-wikikg2 & 535 & 2,500,604 & 16,109,182 & 429,456 & 598,543 \\
        \bottomrule
    \end{tabular}
    \label{tab:transductive_statistics}
\end{table}

\begin{table}[!h]
    \centering
    \caption{Dataset statistics for inductive knowledge graph reasoning.}
    \begin{adjustbox}{max width=\textwidth}
        \begin{tabular}{llcccccccccc}
            \toprule
            \multirow{2}{*}{\bf{Dataset}} & & \multirow{2}{*}{\bf{\#Relation}} & \multicolumn{3}{c}{\bf{Train}} & \multicolumn{3}{c}{\bf{Validation}} & \multicolumn{3}{c}{\bf{Test}} \\
            & & & \#Entity & \#Query & \#Fact & \#Entity & \#Query & \#Fact & \#Entity & \#Query & \#Fact \\
            \midrule
            \multirow{4}{*}{FB15k-237}
            & v1 & 180 & 1,594 & 4,245 & 4,245 & 1,594 & 489 & 4,245 & 1,093 & 205 & 1,993 \\
            & v2 & 200 & 2,608 & 9,739 & 9,739 & 2,608 & 1,166 & 9,739 & 1,660 & 478 & 4,145 \\
            & v3 & 215 & 3,668 & 17,986 & 17,986 & 3,668 & 2,194 & 17,986 & 2,501 & 865 & 7,406 \\
            & v4 & 219 & 4,707 & 27,203 & 27,203 & 4,707 & 3,352 & 27,203 & 3,051 & 1,424 & 11,714 \\
            \multirow{4}{*}{WN18RR}
            & v1 & 9 & 2,746 & 5,410 & 5,410 & 2,746 & 630 & 5,410 & 922 & 188 & 1,618 \\
            & v2 & 10 & 6,954 & 15,262 & 15,262 & 6,954 & 1,838 & 15,262 & 2,757 & 441 & 4,011 \\
            & v3 & 11 & 12,078 & 25,901 & 25,901 & 12,078 & 3,097 & 25,901 & 5,084 & 605 & 6,327 \\
            & v4 & 9 & 3,861 & 7,940 & 7,940 & 3,861 & 934 & 7,940 & 7,084 & 1,429 & 12,334 \\
            \bottomrule
        \end{tabular}
    \end{adjustbox}
    \label{tab:inductive_statistics}
\end{table}

As for efficiency evaluation, we compute the number of messages (\#message) per step, wall time per epoch and memory cost. The number of messages is averaged over all samples and steps
\begin{equation}
    \#\text{message} = \E_{(u, q, v) \in \gE} \E_{t} \left|\gE^{(t)}\right|
\end{equation}
The wall time per epoch is defined as the average time to complete a single training epoch. We measure the wall time based on 10 epochs. The memory cost is measured by the function \mintinline{python}{torch.cuda.max_memory_allocated()} in PyTorch.

\section{Implementation Details}
\label{app:implementation}

Our work is based on the open-source codebase of path-based reasoning with Bellman-Ford algorithm\footnote{\url{https://github.com/DeepGraphLearning/NBFNet}. MIT license.}. Tab.~\ref{tab:hyperparam} lists the hyperparameters for \method on all datasets and in both transductive and inductive settings. For the inductive setting, we use the same set of hyperparameters for all 4 splits of each dataset.

\begin{table}[!h]
    \centering
    \caption{Hyperparameter configurations of \method on all datasets. For FB15k-237, WN18RR and YAGO3-10, we use the same hyperparameters as NBFNet~\cite{zhu2021neural}, except for the neural priority function introduced in \method. There is no publicly available hyperparameters of NBFNet on ogbl-wikikg2.}
    \footnotesize
    \begin{adjustbox}{max width=\textwidth}
        \begin{tabular}{lccccccc}
            \toprule
            \multirow{2}{*}{\bf{Hyperparameter}} & & \multicolumn{2}{c}{\bf{FB15k-237}} & \multicolumn{2}{c}{\bf{WN18RR}} & \bf{YAGO3-10} & \bf{ogbl-wikikg2} \\
            & & transductive & inductive & transductive & inductive & transductive & transductive \\
            \midrule
            \multirow{5}{*}{\bf{Message Passing}}
            & \#step ($T$) & 6 & 6 & 6 & 6 & 6 & 6 \\
            & hidden dim. & 32 & 32 & 32 & 32 & 32 & 32 \\
            & message & DistMult & DistMult & DistMult & DistMult & DistMult & DistMult \\
            & aggregation & PNA & sum & PNA & sum & PNA & sum \\
            \midrule
            \multirow{5}{*}{\bf{Priority Function}}
            & $\vg(\cdot)$ \#layer & 1 & 1 & 1 & 1 & 1 & 1 \\
            & $f(\cdot)$ \#layer & 2 & 2 & 2 & 2 & 2 & 2\\
            & hidden dim. & 64 & 64 & 64 & 64 & 64 & 64 \\
            & node ratio $\alpha$ & 10\% & 50\% & 10\% & 5\% & 10\% & 0.2\% \\
            & degree ratio $\beta$ & 100\% & 100\% & 100\% & 100\% & 100\% & 100\% \\
            \midrule
            \multirow{6}{*}{\bf{Learning}}
            & optimizer & Adam & Adam & Adam & Adam & Adam & Adam \\
            & batch size & 256 & 256 & 256 & 256 & 40 & 128 \\
            & learning rate & 5e-3 & 5e-3 & 5e-3 & 5e-3 & 5e-3 & 5e-3 \\
            & \#epoch & 20 & 20 & 20 & 20 & 0.4 & 0.2 \\
            & adv. temperature & 0.5 & 0.5 & 1 & 1 & 0.5 & 0.5 \\
            & \#negative & 32 & 32 & 32 & 32 & 32 & 1,048,576 \\
            \bottomrule
        \end{tabular}
    \end{adjustbox}
    \label{tab:hyperparam}
\end{table}

\paragraph{Neural Parameterization}
For a fair comparison with existing path-based methods, we follow NBFNet~\cite{zhu2021neural} and parameterize $\bigoplus$ with principal neighborhood aggregation (PNA), which is a permutation-invariant function over a set of elements. We parameterize $\bigotimes$ with the relation operation from DistMult~\cite{yang2015embedding}, i.e., vector multiplication. Note that PNA relies on the degree information of each node to perform aggregation. We observe that PNA does not generalize well when degrees are dynamically determined by the priority function. Therefore, we precompute the degree for each node on the full graph, and use them in PNA no matter how many nodes and edges are selected by the priority function.

Following NBFNet~\cite{zhu2021neural}, we parameterize the indicator function as $\mathbbm{1}_q(u = v) = \mathbbm{1}(u = v)\vq$. Intuitively, this produces a boundary condition of zero vectors except for the head entity $u$, which is labeled with the query embedding $q$. For ogbl-wikikg2, instead of using a boundary condition of mostly zeros, we find it is better to incorporate distance information in the boundary condition. To this end, we use the personalized PageRank score $p_{u, v}$ from $u$ to $v$ as a soft distance metric, and parameterize the indicator function as $\mathbbm{1}_q(u = v) = \mathbbm{1}(u = v)\vq + \mathbbm{1}(u \neq v)\vp_{u, v}$, where $\vp_{u, v}$ is an embedding learned based on discretized value of $p_{u, v}$.

\paragraph{Data Augmentation}
We follow the data augmentation steps of NBFNet~\cite{zhu2021neural}. For each triplet $(x, r, y)$, we add an inverse triplet $(y, r^{-1}, x)$ to the knowledge graph, so that \method can propagate in both directions. Each triplet and its inverse may have different priority and are picked independently in the edge selection step. Since test queries are always missing in the graph, we remove the edges of training queries during training to prevent the model from copying the input.

\section{More Experiment Results}
\label{app:experiment}

Tab.~\ref{tab:yago} shows the performance and efficiency results on YAGO3-10. We observe that \method achieves compatible performance with NBFNet, while reducing the number of messages by 16.0$\times$. \method also reduces the time and memory of NBFNet by 2.5$\times$ and 2.0$\times$ respectively.

Tab.~\ref{tab:more_inductive} provides all metrics of the performance on inductive knowledge graph reasoning. It can be observed that \method consistently outperforms all compared methods except NBFNet. \method achieves competitive performance compared to NBFNet, despite the fact that \method reduces the number of messages, wall time and memory on both datasets and all splits (Tab.~\ref{tab:inductive_efficiency}).

\begin{table}[!h]
    \centering
    \caption{Performance and efficiency on YAGO3-10. Results of compared methods are from \cite{sun2019rotate}.}
    \begin{subtable}[t]{0.5\textwidth}
        \centering
        \footnotesize
        \caption{Performance results.}
        \begin{tabular}{lcccc}
            \toprule
            \multirow{2}{*}{\bf{Method}} & \multicolumn{4}{c}{\bf{YAGO3-10}} \\
            & MRR & H@1 & H@3 & H@10 \\
            \midrule
            DistMult & 0.34 & 0.24 & 0.38 & 0.54 \\
            ComplEx & 0.36 & 0.26 & 0.40 & 0.55 \\
            RotatE & 0.495 & 0.402 & 0.550 & 0.670 \\
            \midrule
            NFBNet & \bf{0.563} & \bf{0.480} & \bf{0.612} & \bf{0.708} \\
            \method & \bf{0.556} & 0.470 & \bf{0.611} & \bf{0.707} \\
            \bottomrule
        \end{tabular}
    \end{subtable}
    \hspace{0.3em}
    \begin{subtable}[t]{0.47\textwidth}
        \centering
        \footnotesize
        \caption{Efficiency results.}
        \begin{adjustbox}{max width=\textwidth}
            \begin{tabular}{lccc}
                \toprule
                \multirow{2}{*}{\bf{Method}} &  \multicolumn{3}{c}{\bf{YAGO3-10}} \\
                & \#message & time & memory \\
                \midrule
                NBFNet & 2,158,080 & 51.3 min & 26.1 GiB \\
                \method & 134,793 & 20.8 min & 13.1 GiB \\
                \midrule
                Improvement & 16.0$\times$ & 2.5$\times$ & 2.0$\times$ \\
                \bottomrule
            \end{tabular}
        \end{adjustbox}
    \end{subtable}
    \label{tab:yago}
\end{table}

\begin{table}[!h]
    \centering
    \caption{Performance on inductive knowledge graph reasoning. V1-v4 refer to the 4 standard splits.}
    \footnotesize
    \begin{adjustbox}{max width=\textwidth}
        \begin{tabular}{lcccccccccccc}
            \toprule
            \multirow{2}{*}{\bf{Method}} & \multicolumn{3}{c}{\bf{v1}} & \multicolumn{3}{c}{\bf{v2}} & \multicolumn{3}{c}{\bf{v3}} & \multicolumn{3}{c}{\bf{v4}} \\
            & MRR & H@1 & H@10 & MRR & H@1 & H@10 & MRR & H@1 & H@10 & MRR & H@1 & H@10 \\
            \midrule
            \multicolumn{13}{c}{\bf{FB15k-237}} \\
            \midrule
            GraIL & 0.279 & 0.205 & 0.429 & 0.276 & 0.202 & 0.424 & 0.251 & 0.165 & 0.424 & 0.227 & 0.143 & 0.389 \\
            \midrule
            NeuralLP & 0.325 & 0.243 & 0.468 & 0.389 & 0.286 & 0.586 & 0.400 & 0.309 & 0.571 & 0.396 & 0.289 & 0.593 \\
            DRUM & 0.333 & 0.247 & 0.474 & 0.395 & 0.284 & 0.595 & 0.402 & 0.308 & 0.571 & 0.410 & 0.309 & 0.593 \\
            NBFNet & 0.422 & 0.335 & 0.574 & \bf{0.514} & \bf{0.421} & \bf{0.685} & \bf{0.476} & \bf{0.384} & \bf{0.637} & 0.453 & \bf{0.360} & 0.627 \\
            RED-GNN & 0.369 & 0.302 & 0.483 & 0.469 & 0.381 & 0.629 & 0.445 & 0.351 & 0.603 & 0.442 & 0.340 & 0.621 \\
            \method & \bf{0.457} & \bf{0.381} & \bf{0.589} & \bf{0.510} & \bf{0.419} & 0.672 & \bf{0.476} & \bf{0.389} & \bf{0.629} & \bf{0.466} & \bf{0.365} & \bf{0.645} \\
            \midrule[0.08em]
            \multicolumn{13}{c}{\bf{WN18RR}} \\
            \midrule
            GraIL & 0.627 & 0.554 & 0.760 & 0.625 & 0.542 & 0.776 & 0.323 & 0.278 & 0.409 & 0.553 & 0.443 & 0.687 \\
            \midrule
            NeuralLP & 0.649 & 0.592 & 0.772 & 0.635 & 0.575 & 0.749 & 0.361 & 0.304 & 0.476 & 0.628 & 0.583 & 0.706 \\
            DRUM & 0.666 & 0.613 & 0.777 & 0.646 & 0.595 & 0.747 & 0.380 & 0.330 & 0.477 & 0.627 & 0.586 & 0.702 \\
            NBFNet & \bf{0.741} & \bf{0.695} & \bf{0.826} & \bf{0.704} & \bf{0.651} & \bf{0.798} & \bf{0.452} & \bf{0.392} & \bf{0.568} & 0.641 & 0.608 & 0.694 \\
            RED-GNN & 0.701 & 0.653 & 0.799 & 0.690 & 0.633 & 0.780 & 0.427 & 0.368 & 0.524 & 0.651 & 0.606 & 0.721 \\
            \method & 0.727 & 0.682 & 0.810 & \bf{0.704} & \bf{0.649} & \bf{0.803} & 0.441 & \bf{0.386} & 0.544 & \bf{0.661} & \bf{0.616} & \bf{0.743} \\
            \bottomrule
        \end{tabular}
        \label{tab:more_inductive}
    \end{adjustbox}
\end{table}

\begin{table}[!h]
    \centering
    \caption{Efficiency on inductive knowledge graph reasoning. V1-v4 refer to the 4 standard splits.}
    \footnotesize
    \begin{adjustbox}{max width=\textwidth}
        \begin{tabular}{lcccccccccccc}
            \toprule
            \multirow{2}{*}{\bf{Method}} 
            & \multicolumn{3}{c}{\bf{v1}} & \multicolumn{3}{c}{\bf{v2}} & \multicolumn{3}{c}{\bf{v3}} & \multicolumn{3}{c}{\bf{v4}} \\
            & \#msg. & time & memory & \#msg. & time & memory & \#msg. & time & memory & \#msg. & time & memory \\
            \midrule
            \multicolumn{13}{c}{\bf{FB15k-237}}\\
            \midrule
            NBFNet & 8,490 & 4.50 s & 2.79 GiB & 19,478 & 11.3 s & 4.49 GiB & 35,972 & 27.2 s & 6.28 GiB & 54,406 & 50.1 s & 7.99 GiB \\
            \method  & 2,644 & 3.40 s & 0.97 GiB & 6,316 & 8.90 s & 1.60 GiB & 12,153 & 18.9 s & 2.31 GiB & 18,501 & 33.7 s & 3.05 GiB \\
            \midrule
            Improvement & 3.2$\times$ & 1.3$\times$ & 2.9$\times$ & 3.1$\times$ & 1.3 $\times$ & 2.8$\times$ & 3.0$\times$ & 1.4$\times$ & 2.7$\times$ & 2.9$\times$ & 1.5$\times$ & 2.6$\times$ \\
            \midrule[0.08em]
            \multicolumn{13}{c}{\bf{WN18RR}} \\
            \midrule
            NBFNet & 10,820 & 8.80 s & 1.79 GiB & 30,524 & 30.9 s & 4.48 GiB & 51,802 & 78.6 s & 7.75 GiB & 7,940 & 13.6 s & 2.49 GiB \\ 
            \method & 210 & 2.85 s & 0.11 GiB & 478 & 8.65 s & 0.26 GiB & 704 & 13.2 s &  0.41 GiB & 279 & 4.20 s & 0.14 GiB \\
            \midrule
            Improvement & 51.8$\times$ & 3.1$\times$ & 16.3$\times$ & 63.9$\times$ & 3.6$\times$ & 17.2$\times$ & 73.6$\times$ & 6.0$\times$ & 18.9$\times$ & 28.5$\times$ & 3.2$\times$ & 17.8$\times$ \\
            \bottomrule
        \end{tabular}
    \end{adjustbox}
    \label{tab:inductive_efficiency}
    \vspace{-1em}
\end{table}

\section{More Visualization of Learned Important Paths}
\label{app:visualization}

Fig.~\ref{fig:more_visualization} visualizes learned important paths on different samples. All the samples are picked from the test set of transductive FB15k-237.

\begin{figure}[!h]
    \centering
    \begin{subfigure}{0.48\textwidth}
        \centering
        \includegraphics[width=0.96\textwidth]{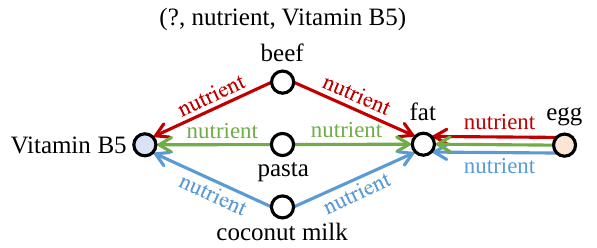}
    \end{subfigure}
    \hspace{0.5em}
    \begin{subfigure}{0.48\textwidth}
        \centering
        \includegraphics[width=0.96\textwidth]{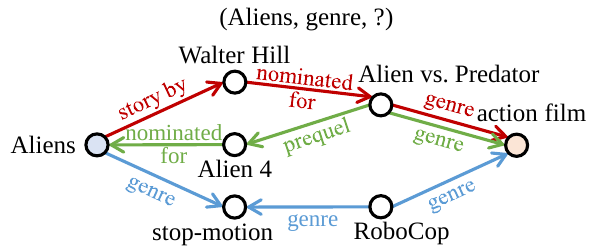}
    \end{subfigure}
    \par\medskip
    \begin{subfigure}{0.48\textwidth}
        \centering
        \includegraphics[width=0.96\textwidth]{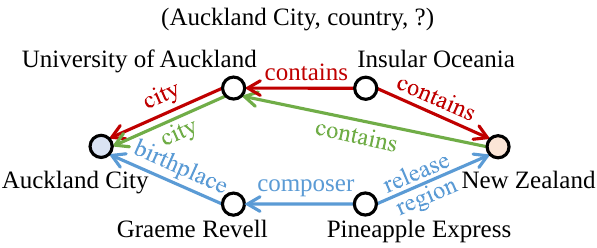}
    \end{subfigure}
    \hspace{0.5em}
    \begin{subfigure}{0.48\textwidth}
        \centering
        \includegraphics[width=0.96\textwidth]{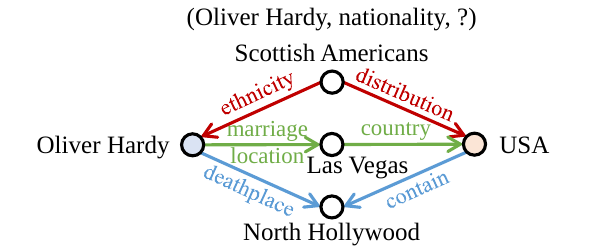}
    \end{subfigure}
    \caption{Visualization of important paths in \method on different test samples. Each important path is highlighted by a separate color.}
    \label{fig:more_visualization}
\end{figure}

\end{document}